\def\eqref#1{equation~\ref{#1}}
\def\1{\bm{1}}
\DeclareMathAlphabet{\mathsfit}{\encodingdefault}{\sfdefault}{m}{sl}
\SetMathAlphabet{\mathsfit}{bold}{\encodingdefault}{\sfdefault}{bx}{n}
\newcommand{\blue}[1]{\textcolor{black}{#1}}
\let\cite\citep
\title{FP4DiT: Towards Effective Floating Point Quantization\\for Diffusion Transformers}
\author{\name Ruichen Chen\thanks{Equal contribution.} 
\email ruichen1@ualberta.ca \\
      \addr Department of Electrical and Computer Engineering, Faculty of Engineering\\
      University of Alberta
      \AND
      \name Keith G. Mills$^*$ \email keith.mills@lsu.edu \\
      \addr Division of Computer Science and Engineering, College of Engineering \\
      Louisiana State University
      \AND
      \name Di Niu \email dniu@ualberta.ca\\
      \addr Department of Electrical and Computer Engineering, Faculty of Engineering \\
      University of Alberta}
\begin{document}

\maketitle

\begin{abstract}
Diffusion Models (DM) have revolutionized the text-to-image visual generation process. However, the large computational cost and model footprint of DMs hinders practical deployment, especially on edge devices. Post-training quantization (PTQ) is a lightweight method to alleviate these burdens without the need for training or fine-tuning. While recent DM PTQ methods achieve W4A8 \blue{(i.e., 4-bit weights and 8-bit activations)} on integer-based PTQ, two key limitations remain: First, while most existing DM PTQ methods evaluate on classical DMs like Stable Diffusion XL, 1.5 or earlier, which use convolutional U-Nets, newer Diffusion Transformer (DiT) models like the PixArt series, Hunyuan and others adopt fundamentally different transformer backbones to achieve superior image synthesis. Second, integer (INT) quantization is prevailing in DM PTQ but does not align well with the network weight and activation distribution, while Floating-Point Quantization (FPQ) is still under-investigated, yet it holds the potential to better align the weight and activation distributions in low-bit settings for DiT. In this paper, we introduce FP4DiT, a PTQ method that leverages FPQ to achieve W4A6 quantization. Specifically, we extend and generalize the Adaptive Rounding PTQ technique to adequately calibrate weight quantization for FPQ and demonstrate that DiT activations depend on input patch data, necessitating robust online activation quantization techniques. Experimental results demonstrate that FP4DiT \blue{achieves higher CLIP, ImageReward and HPSv2 performance compared to integer-based PTQ at the W4A6 and W4A8 precision levels while generating convincing visual content on PixArt-$\alpha$, PixArt-$\Sigma$ and Hunyuan.}
Code is available at \href{https://github.com/cccrrrccc/FP4DiT}{https://github.com/cccrrrccc/FP4DiT}.

\end{abstract}

\section{Introduction}
\label{sec:intro}

Diffusion
Transformers (DiT)~\cite{peebles2023scalable} are on the forefront of open-source generative visual synthesis. In contrast to earlier text-to-image (T2I) Diffusion Models (DMs) like Stable Diffusion v1.5~\cite{Rombach_2022_CVPR} and Stable Diffusion XL~\cite{podell2023sdxl} that utilize a classical U-Net structure, DiTs such as PixArt-$\alpha$~\cite{chen2024pixartAlpha}, PixArt-$\Sigma$~\cite{chen2024pixartsigma} and Stable Diffusion 3 (SD3)~\cite{esser2024scaling} leverage streamlined, patch-based Transformer architectures to generate high-resolution images. 

Nevertheless, similar to U-Nets, DiTs utilize a lengthy denoising process that incurs a high computational inference cost. \blue{Approaches to reducing this cost generally aim to reduce the inference cost of the denoising network~\cite{fang2023structural, castells2024ld} or reduce the number of denoising steps~\cite{yin2024improved, salimans2022progressive}. One specific method to reduce the denoiser cost is} quantization~\cite{dettmers2022gpt3, yao2022zeroquant}, which reduces the bit-precision of neural network weights and activations. 
As the first Post-Training Quantization (PTQ) schemes for DMs, PTQ4DM~\cite{shang2023post} and Q-Diffusion~\cite{li2023q} 
demonstrate that the range and distribution of U-Net activations crucially depend on the diffusion timestep. 
More recent state-of-the-art works like TFMQ-DM~\cite{huang2024tfmq} specialize quantization for U-Net timestep conditioning which may not generalize to newer DiTs. 
Further, methods like ViDiT-Q~\cite{zhao2024vidit} adapt outlier suppression technique~\cite{xiao2023smoothquant} to DiTs, but overlook broader advantages of prior DM PTQ like weight reconstruction~\cite{nagel2020up, li2021brecq}. 

Moreover, the prevailing datatypes in existing DM PTQ literature~\cite{shang2023post, li2023q, huang2024tfmq, zhao2024vidit, so2024temporal, feng2024mpq, mills2025qua2sedimo} are integer-based (INT), which provide uniformly distributed values~\cite{nahshan2021loss} unlike the non-uniform distribution of weights and activations in modern neural networks~\cite{shen2024efficient}.
Thus, PTQ for text-to-image (T2I) DiTs below W4A8 precision (4-bit weights and 8-bit activations) without severely compromising generation quality remains an open challenge. 

In this paper, we present FP4DiT, which achieves W4A6 PTQ on Diffusion Transformers with non-uniform Floating-Point Quantization (FPQ)~\cite{kuzmin2022fp8}, thus achieving high quantitative and qualitative T2I performance.
Besides, by introducing FPQ, FP4DiT not only aligns the quantization levels better with the weight and activation distribution with negligible computational overhead, it also massively reduces the cost of weight calibration by over 8$\times$.
Our detailed contributions are summarized as follows:
\begin{enumerate}
    \item We apply FPQ to DiT to address the misalignment between the existing DM PTQ literature and the non-uniform distribution of network weights and activations.
    \item We reveal the critical role of preserving the sensitive interval of DiT's GELU activation function and propose a mixed-format FPQ method tailored for DiT.
    \item We examine the adaptive rounding (AdaRound)~\cite{nagel2020up} mechanism, originally designed for integer PTQ, and reveal a performance-hampering design limitation when applied to FPQ. In this paper, we introduce a novel mathematical scaling mechanism that greatly improves the performance of AdaRound when utilized in the FPQ scenario. %
    \item We analyze DiT activation distributions and visualize how they contrast to those of convolutional U-Nets, especially with respect to diffusion timesteps. Specifically, while U-Net activation ranges \textit{shrink} with timestep progression, DiT activations ranges instead \textit{shift} over time. 
    To address this, we implement an effective online activation quantization~\cite{wu2023zeroquant, yao2022zeroquant} scheme to accommodate DiT activations.
\end{enumerate}

We apply FP4DiT as a PTQ method on T2I DiT models, namely PixArt-$\alpha$, PixArt-$\Sigma$, and Hunyuan. To verify the effectiveness of FP4DiT, we conduct experiments on \blue{the MS-COCO dataset~\cite{lin2014microsoft} and the Human Preference Score v2 (HPSv2) T2I benchmark,} 
to outperform existing methods like Q-Diffusion~\cite{li2023q}, TFMQ-DM~\cite{huang2024tfmq}, ViDiT-Q~\cite{zhao2024vidit} and Q-DiT~\cite{chen2024QDiT} at the W4A8 and W4A6 precision levels. Additionally, we perform a human preference study which demonstrates the superiority of FP4DiT-generated images.

\section{Related Work}
\label{sec:related}

Diffusion Transformers (DiT)~\cite{peebles2023scalable} replace the classical convolutional U-Net~\cite{Rombach_2022_CVPR} backbone with a modified Vision Transformer (ViT)~\cite{dosovitskiy2020image} to 
increase scalability. Although the introduction of DiT architectures in newer DMs~\cite{chen2024pixartAlpha, chen2024pixartsigma, esser2024scaling, li2024hunyuandit, bflFlux24, xie2024sana} enables the generation of high-quality visual content~\cite{videoworldsimulators2024}, DiTs still suffer from a computationally expensive diffusion process, rendering deployment on edge devices impractical and cumbersome. \blue{Methods to reduce this cost might aim to compress the denoiser neural network by pruning~\cite{fang2023structural, castells2024ld}, sparsifying key operations~\cite{chen2025rettention}, or by targeting %
a reduction in the number of denoising steps needed to generate an image~\cite{yin2024improved, salimans2022progressive}. This work focuses on the former objective while also aiming to discover some of the challenges associated with achieving DiT improvement compared to prior U-Nets.} 

Quantization is a neural network compression technique that involves reducing the bit-precision of weights and activations to lower hardware metrics like model size, inference latency and memory consumption~\cite{nagel2021white}. The objective 
of quantization research is to 
reduce bit-precision as much as possible 
while preserving overall model performance~\cite{ma2024era}.
There are two main classes of quantization: Quantization-Aware-Training (QAT)~\cite{sui2024bitsfusion, he2023efficientdm, feng2024mpq} and Post-Training Quantization (PTQ)~\cite{li2021brecq, mills2025qua2sedimo, frantar2022optimal}. Specifically, 
PTQ is more 
lightweight 
and neither requires re-training nor substantial amounts of data. %
Rather, PTQ requires a small amount of data  
to calibrate quantization scales~\cite{nagel2020up}, typically in a 
block-wise manner~\cite{li2021brecq}. 
However, 
while most PTQ methods rely on uniformly-distributed integer (INT) quantization techniques~\cite{krishnamoorthi2018quantizing, jacob2018quantization, zhang2025comq}, recent literature highlights the advantages of low-bit floating point quantization (FPQ)~\cite{wu2023zeroquant, liu2023llm} for Large Language Models (LLM). Therefore, in this paper we investigate the challenges in applying FPQ to DM PTQ. 

The denoising process of DMs brings new challenges for PTQ compared with traditional computer vision neural networks. The earliest DM PTQ 
research~\cite{shang2023post} reveals the significant activation range changes across different denoising timesteps. Q-Diffusion~\cite{li2023q} samples calibration data across different denoising timesteps to address this challenge. TDQ~\cite{so2024temporal} calibrates an individual set of the quantization parameters across different time steps, offering a more fine-grained approach to managing temporal dependencies. 
TFMQ-DM~\cite{huang2024tfmq} highlights the sensitivity of temporal features in U-Nets and introduce a calibration objective aimed at better preserving temporal characteristics. However, the above works are specific to U-Net architectures while DiT architectures feature distinct activation characteristics. 

In contrast, LLM quantization~\cite{dettmers2022gpt3, frantar2022gptq, chee2023quip, liu2024spinquant, zhang2024magr} operates on %
generative AI transformers. %
The key challenge is that multi-billion parameter transformers tend to generate outlier hidden states that are difficult to effectively quantize while %
preserving end-to-end performance~\cite{lee2024owq, shang2023pb}. %
This can be addressed %
by leveraging the learnable affine shift of layernorm operations to adjust transformer attention and feedforward weights~\cite{xiao2023smoothquant, lin2023awq, lin2024duquant}. However, DiTs use Adaptive Layernorm (AdaLN)~\cite{perez2018film}, which ties the affine shift to the timestep embedding, so these methods are less applicable. %
Additionally, LLM quantization typically features more lightweight calibration~\cite{ashkboos2024quarot} as parameter-heavy models make advanced PTQ~\cite{nagel2020up, li2021brecq} costly. However, DiTs typically have fewer parameters and benefit from diverse calibration sets to cover multiple timesteps. Thus, FP4DiT leverages prior work on PTQ calibration to %
quantize DiTs. 

Finally, some early research exists on DiT PTQ~\cite{chen2024QDiT}. HQ-DiT~\cite{liu2024hq} apply FPQ to class-conditional ImageNet~\cite{deng2009imagenet} DiTs, but do not consider text-to-image (T2I) models like the PixArt~\cite{chen2024pixartAlpha, chen2024pixartsigma} series. 
ViDiT-Q~\cite{zhao2024vidit} utilizes fine-grained techniques including channel balancing, mixed-precision and LLM outlier suppression, %
it does not incorporate weight reconstruction~\cite{nagel2020up, li2021brecq} from prior DM PTQ works. 
In contrast, this work aggregates %
knowledge %
from existing DM PTQ methods and refines them for application on T2I DiT models. 

\section{Methodology}
\label{sec:method}
In this section, we present our PTQ solution for the T2I DiT model.
First, we analyze the sensitivity of the DiT block in the PixArt and Hunyuan model and propose a mixed FP format for the FP4 weight quantization. Second, we propose a scale-aware AdaRound tailored for FP weight quantization. 
Lastly, we investigate and contrast U-Net and DiT activation distribution information.

\subsection{Uniform vs. Non-Uniform Quantization}
Quantization compresses neural network %
size by reducing the bit-precision of weights and activations, e.g. rounding from 32/16-bit datatypes into an $n$-bit quantized datatype, where $n\leq 8$ typically. For instance, we can perform uniform integer (INT) quantization on a tensor $\mathbf{X}$ to round it into a lower-bit representation $\mathbf{X}^{(\text{int})}$ as follows:  %
\begin{equation}
    \mathbf{X}^{(\text{int})} = \text{clip} \left( \left\lfloor \frac{\mathbf{X}}{s} \right\rceil + z, x_{\text{min}}, x_{\text{max}} \right)
    \label{eq:int_quant}
\end{equation}

where $s$ is scale, $z$ is the zero point, and $\left\lfloor \cdot \right\rceil$ is operation rounding-to-nearest. INT quantization rounds the values to a range with $2^{n}$ points. Specifically, the range is always a uniform grid, whose size decreases by half each time $n$ decreases by 1. 

\begin{wrapfigure}{rt}{0.4\linewidth}
    \centering
    \includegraphics[width=0.4\textwidth]{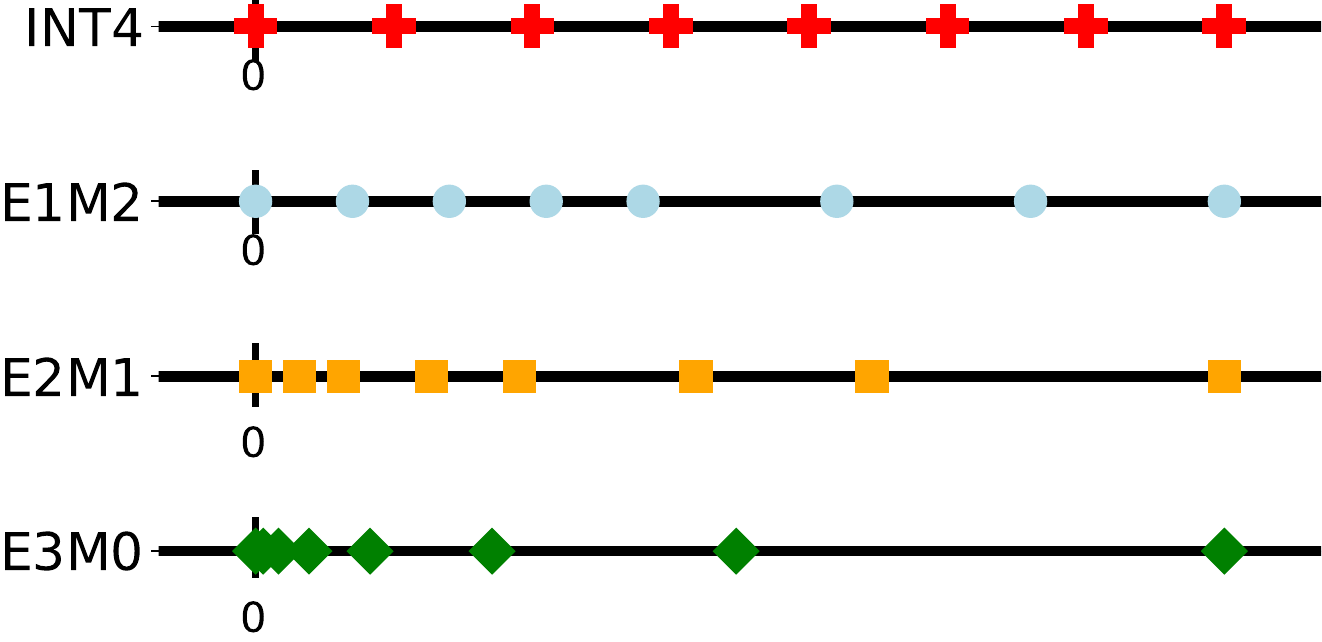}
    \caption{Value distributions for INT4 and three variants of FP4: E1M2, E2M1 and E3M0. Note that E0M3 is INT4. Observe how INT4 values are evenly distributed, while FP4 values cluster %
    at the origin as the number of exponent (E) bits increases. 
    }
    \label{fig:fp_vs_int}
    \includegraphics[width=0.4\textwidth]{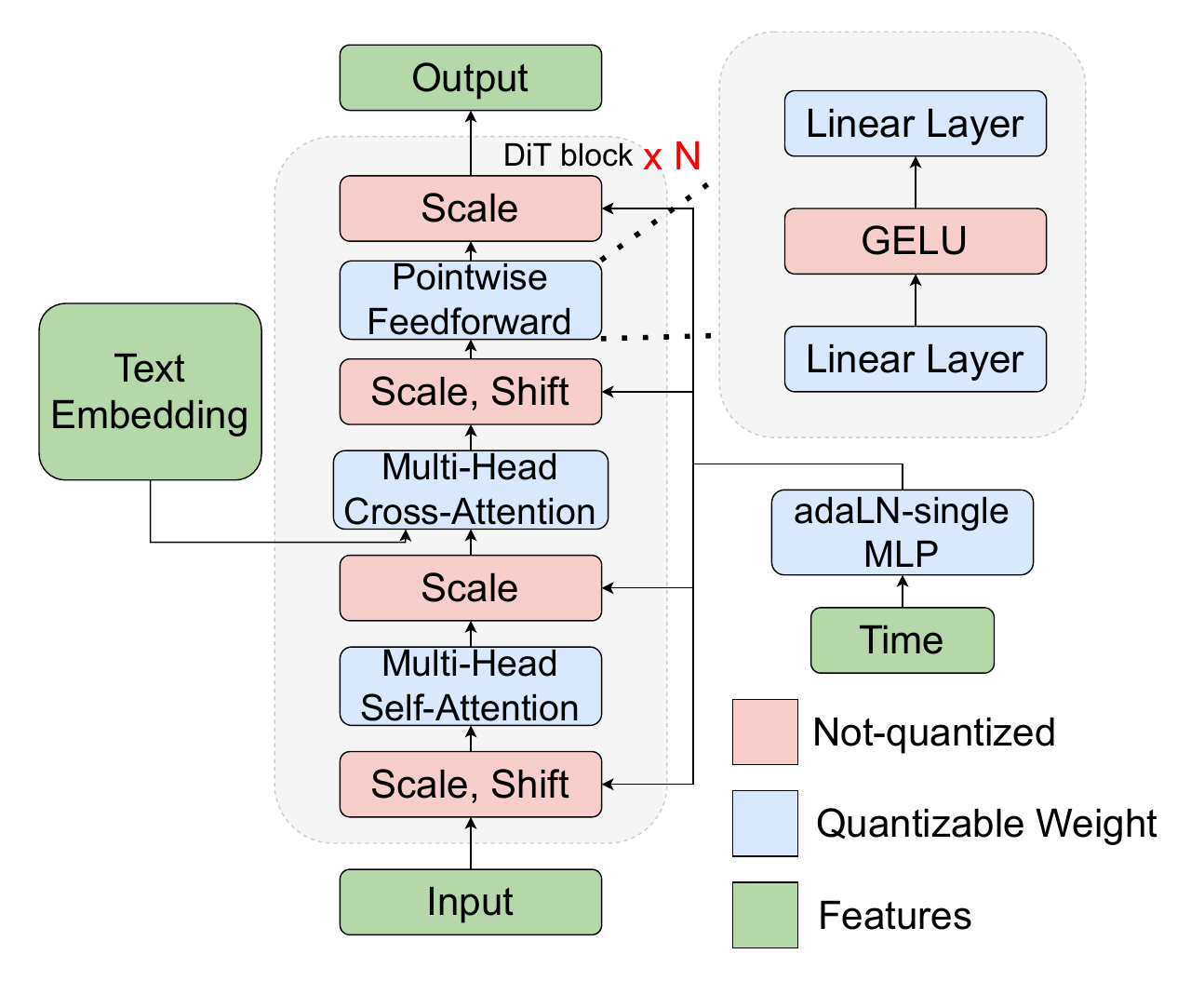}
    \caption{T2I DiT block diagram. In PixArt-$\alpha$/$\Sigma$, all DiT blocks share the same AdaLN-single MLP for time conditions. The scale and shift for layer normalization in DiT blocks depend on the embedding from AdaLN-single and the layer-specific training embedding. Colored blocks demarcate quantizable weight layers from %
    activations and normalizations.} %
    
    \label{fig:pixart}
    \vspace{-14mm}
\end{wrapfigure}

In contrast, Floating-Point Quantization (FPQ) uses standard floating-point numbers as follows:
\begin{equation}
    f = (-1)^{d_s} 2^{p - b} \left( 1 + \frac{d_1}{2} + \frac{d_2}{2^2} + \cdots + \frac{d_m}{2^m} \right)
\label{eq: fp_format}
\end{equation}

where $d_s \in \{0, 1\}$ is the sign bit and $b$ is the bias. $p = d_1 + d_2 * 2 + \cdots + d_e * 2^{e - 1}$ represents the $e$-bit exponent part while $\left( 1 + \frac{d_1}{2} + \frac{d_2}{2^2} + \cdots + \frac{d_m}{2^m} \right)$ represents the $m$-bit mantissa part. Note that $d_{i} \in \{0, 1\}$ for bits in both the mantissa and the exponent part. The FP format can be seen as multiple consecutive $m$-bit uniform grids with different exponential scales. Therefore, the FPQ is operated similarly to Equation~\ref{eq:int_quant}, with distinct scaling factors applied to values across varying magnitudes.

The key advantage of FPQ, especially at low-bit precision for quantization, is that they enjoy a richer granularity of value distributions owing to the numerous ways we can vary %
the allocation of exponent and mantissa bits. This is analogous to the introduction of the `BFloat16'~\cite{kalamkar2019study} format, which achieves superiority over the older IEEE standard 754 `Float16'~\cite{kahan1996ieee} in certain deep machine algorithms~\cite{lee2024fp8} by allocating 8-bits towards the exponent, as the larger `Float32' format does. Broadly, an $n$-bit floating point datatype posses $n-1$ possible distributions as $m \in [0, n-1]$, and even adopts the uniform distribution of the corresponding $n$-bit integer format when $m=n-1$.

Figure~\ref{fig:fp_vs_int} visualizes this advantage by showing %
the discrete value distribution of INT4 and FP4 under different FP formats. The bits allocation between the mantissa and exponent significantly influences the performance of quantization as depicted. While the flexibility of floating-point format benefits the quantization, improper FP format can result in sub-optimal performance~\cite{shen2024efficient}. Hence, in the following section, we present our analysis of the DiT blocks and introduce our method, which adjusts the FP format when quantizing different DiT weights.

\newpage %
\subsubsection{Optimized FP Formats in DiT Blocks.}
\label{sec: FP FORMAT}

\begin{wrapfigure}{rt}{0.4\linewidth}
    \centering
    \includegraphics[width=0.4\textwidth]{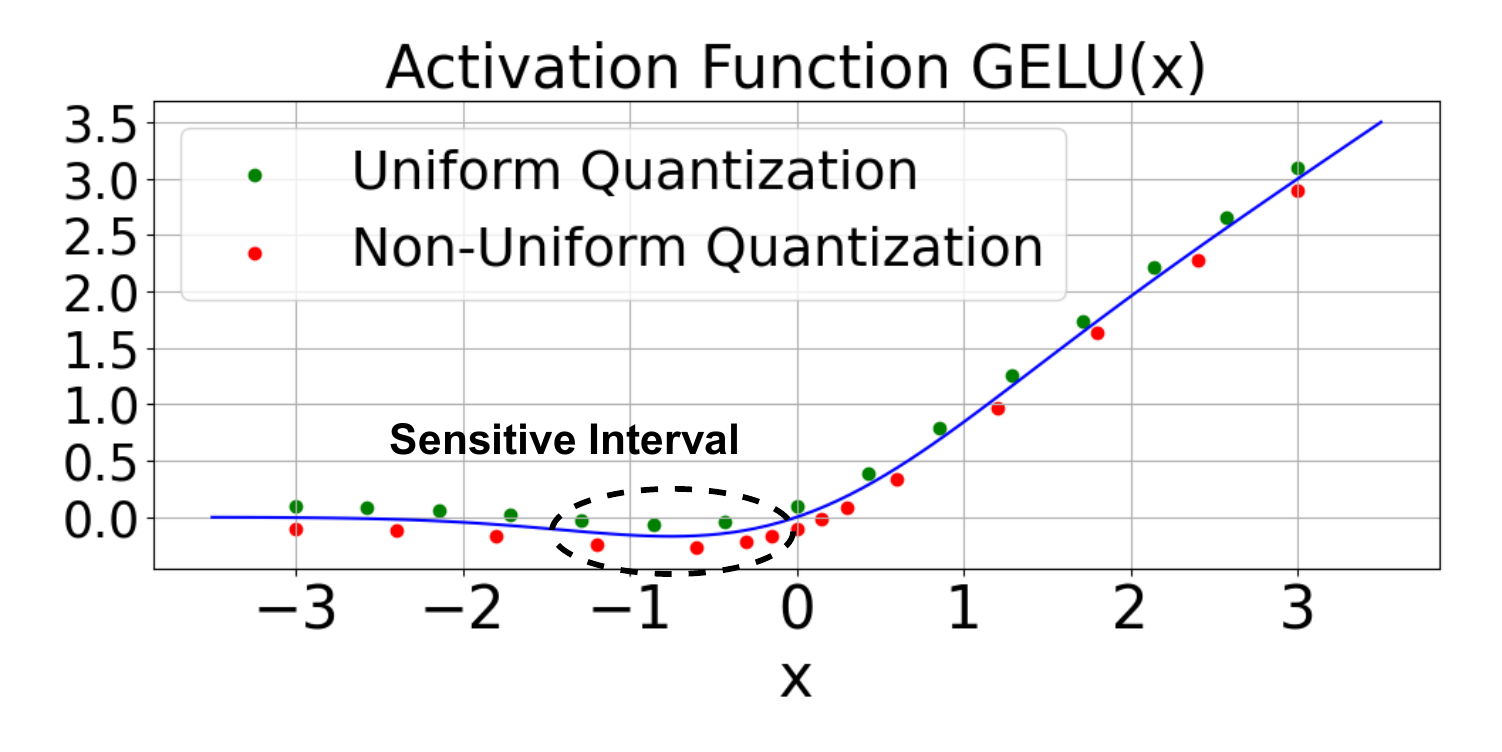}
    \caption{The GELU activation and its sensitive interval. With the same amount of discrete values, non-uniform quantization can better capture the sensitive interval.}
    \label{fig:GELU}
\end{wrapfigure}

Figure~\ref{fig:pixart} illustrates the structure of a T2I DiT Block. In a DiT block, the Pointwise Feedforward is unique in that it consists of a non-linear GELU activation flanked by linear layer before and after. GELU, plotted in Figure~\ref{fig:GELU} contains a sensitive region where the function returns a negative output. 
Interestingly, Reggiani et al., 2023~\cite{reggiani2023flex} show that focusing on this sensitive interval helps reduce the mean-squared error when approximating GELU using Look-Up Tables (LUTs) or breakpoints. 
Building on this insight, we apply denser floating point formats, e.g., E3M0, to the first pointwise linear layer. 
This allocates more values closer to zero, i.e., where the GELU sensitive interval lies,
thereby enhancing the precision of the approximation. We further elaborate on this point in the appendix. %

\subsection{AdaRound for FP}

By default, quantization is rounding-to-nearest, e.g., Eq.~\ref{eq:int_quant}. AdaRound~\cite{nagel2020up} show that rounding-to-nearest is not always optimal and instead apply second-order Taylor Expansion on the loss degradation from weight perturbation $\Delta \mathbf{w}$ caused by quantization:
\begin{equation}
    E[\Delta L(\mathbf{w})] \approx \\
    \Delta \mathbf{w}^T \mathbf{g}^{(\mathbf{w})} + \frac{1}{2} \Delta \mathbf{w}^T \mathbf{H}^{(\mathbf{w})} \Delta \mathbf{w}.
\label{eq: second_order_Taylor}
\end{equation}
The gradient term $\mathbf{g}^{(\mathbf{w})}$ is close to 0 as neural networks are trained to be converged. Hence, the loss degradation is determined by the Hessian matrix $\mathbf{H}^{(\mathbf{w})}$, which defines the interactions between different perturbed weights in terms of their joint impact on the task loss. The rounding-to-nearest is sub-optimal because it only considers the on-diagonal elements of $\mathbf{H}^{(\mathbf{w})}$. However, optimizing via a full Hessian matrix is infeasible because of its computational and memory complexity issues. To tackle these issues, the authors make assumptions such as each non-zero block in $\mathbf{H}^{(\mathbf{w})}$ corresponds to one layer, and then propose an objective function:
\begin{equation}
\mathop{\mathrm{arg\,min}}_{\mathbf{V}} \| W\mathbf{x} - \widetilde{W}\mathbf{x} \|^2_F + \lambda f_{\mathrm{reg}}(\mathbf{V}),
\label{eq: adaround_objective}
\end{equation}
The optimization objective is to minimize the Frobenius norm of the difference between the full-precision output $W\mathbf{x}$ and the quantized output $\widetilde{W}\mathbf{x}$ for each layer and $f_{\mathrm{reg}}(\mathbf{V})$ is a differentiable regularizer to encourage the variable $\mathbf{V}$ to converge. BRECQ~\cite{li2021brecq} proposed a similar block-wise optimization objective that further advances the performance of weight reconstruction in PTQ. In detail, $\widetilde{W}$ is defined as follows:
\begin{equation}
\widetilde{W} = s \cdot \mathrm{clip} \left( \left\lfloor \frac{W}{s} \right\rfloor + h(\mathbf{V}), min, max\right)
\label{eq: Soft-quantized}
\end{equation}
where $min$ and $max$ denotes the quantization threshold. $h(\mathbf{V})$ is the rectified sigmoid function proposed by \cite{louizos2017learning}:
\begin{equation}
    h(\mathbf{V}) = \text{clip} \left( \sigma(\mathbf{V}) (\zeta - \gamma) + \gamma, 0, 1 \right)
\label{eq:h_function}
\end{equation}
where $\sigma(\cdot)$ is the sigmoid function and $\zeta$ and $\gamma$ are fixed to 1.1 and -0.1. During optimization the value of $h(\mathbf{V})$ is continuous, while during inference its value will be set to 0 or 1 indicating rounding up or down.

\subsubsection{Scale-aware AdaRound.}

AdaRound has been widely adopted to improve the performance of quantized neural networks~\cite{li2021brecq, li2023q} in low-bit settings like 4-bit weights. 
However, AdaRound assumes weight quantization to low-bit integer formats, like INT4 rather than low-bit FP formats~\cite{van2023fp8}, where non-uniform value distribution (Fig.~\ref{fig:fp_vs_int}) may introduce unique challenges.

Specifically, we identify that the original INT-based AdaRound %
assumes the scale $s$ is consistent across different quantized values. However, this does not hold for FPQ, where there are $2^{E}$ scales. 
Therefore, we propose scale-aware AdaRound which improves the performance and leads to faster convergence.

\begin{figure*}[t]
    \centering
    \begin{subfigure}[b]{0.28\textwidth}
        \centering
        \includegraphics[width=\textwidth]{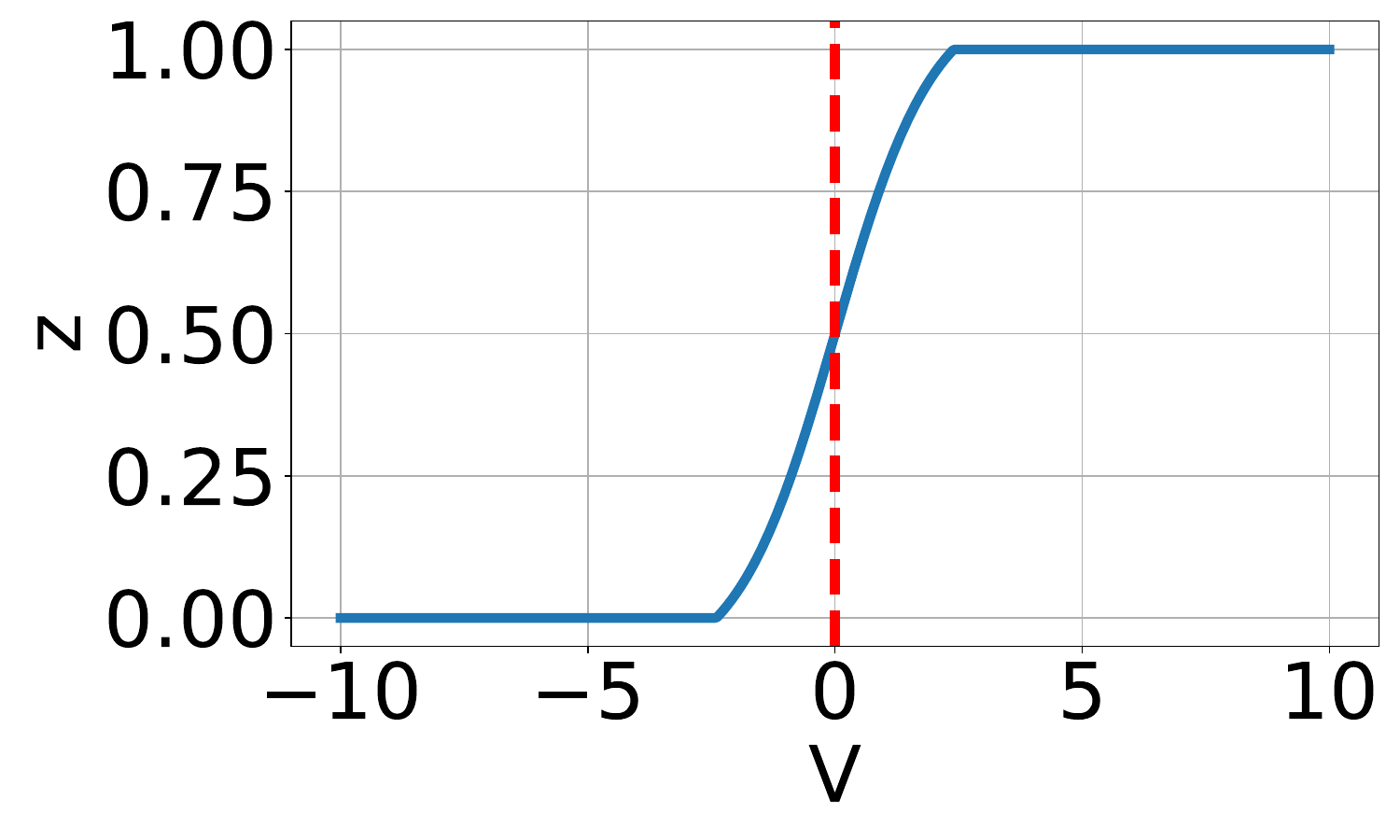}
        \caption{\blue{AdaRound for Integers}}
        \label{fig:adaround_int}
    \end{subfigure}
    \hfill
    \begin{subfigure}[b]{0.28\textwidth}
        \centering
        \includegraphics[width=\textwidth]{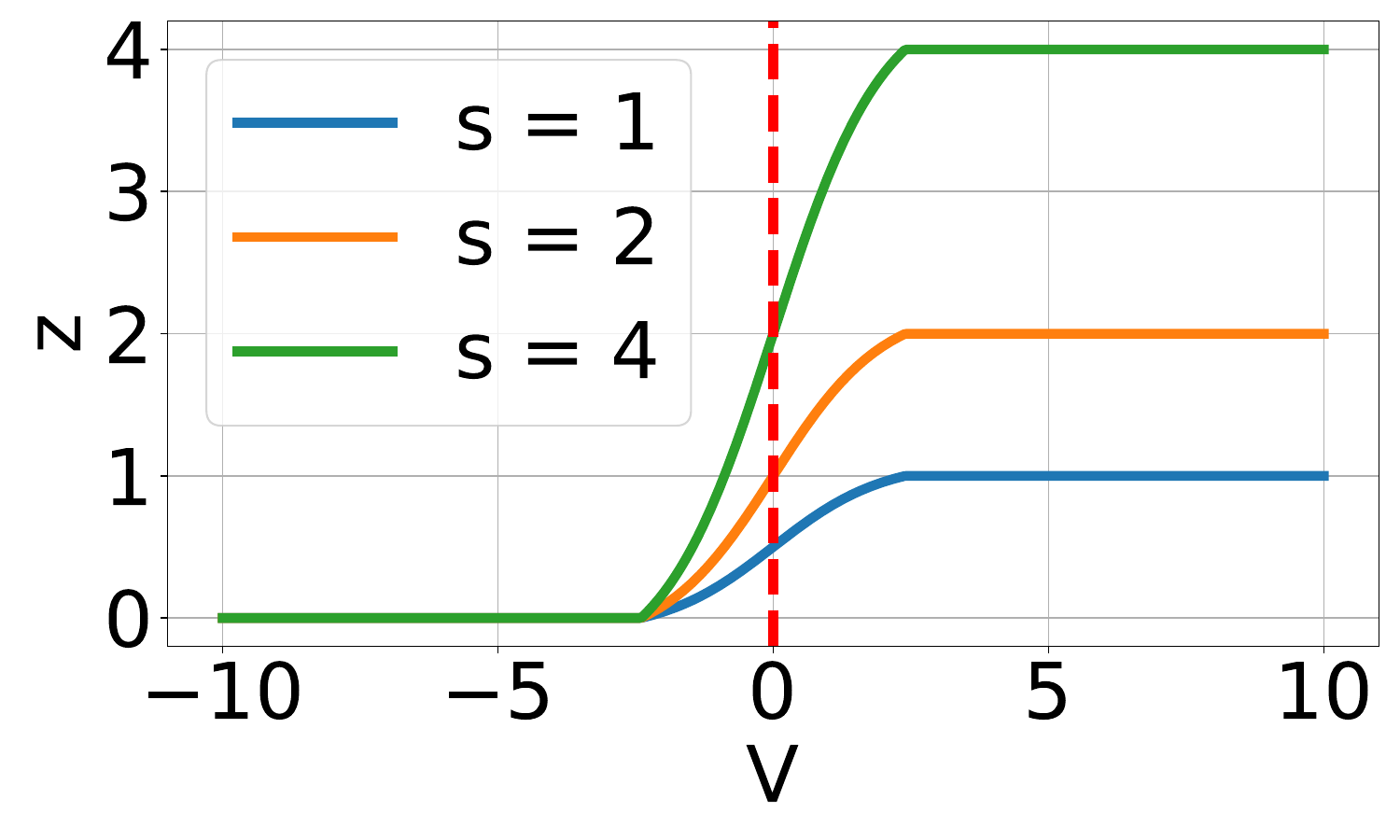}
        \caption{\blue{AdaRound Applied to Floats}}
        \label{fig:adaround_origin}
    \end{subfigure}
    \hfill
    \begin{subfigure}[b]{0.28\textwidth}
        \centering
        \includegraphics[width=\textwidth]{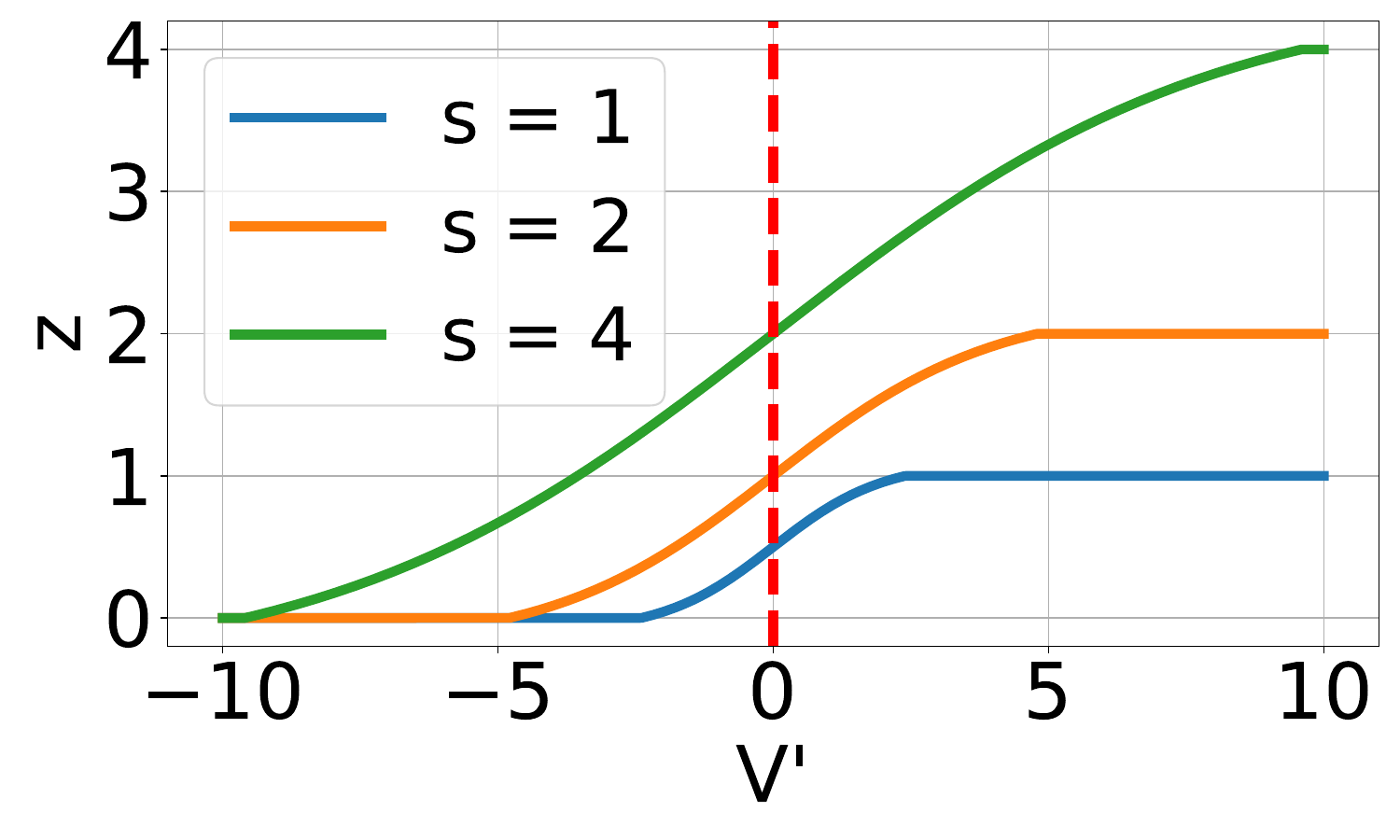}
        \caption{\blue{Scale-Aware AdaRound}}
        \label{fig:adaround_scale}
    \end{subfigure}

    \caption{(a) The binary gate function of INT AdaRound. All gates are identical because there is only one scale in INT quantization. (b) The binary gate functions of origin AdaRound on FP quantization. (c) The binary gate functions of scale-aware AdaRound. The red dashed line indicates the demarcation of rounding up (right) or down (left). Our scale-aware AdaRound normalizes the slope near the turning point, which stabilizes the optimization and helps improve the quantization performance.}
    \label{fig:adaround_compare}
    \vspace{-10pt}
\end{figure*}

\begin{figure*}[t]
    \centering
    \begin{subfigure}[b]{0.35\textwidth}
        \centering
        \includegraphics[width=\textwidth]{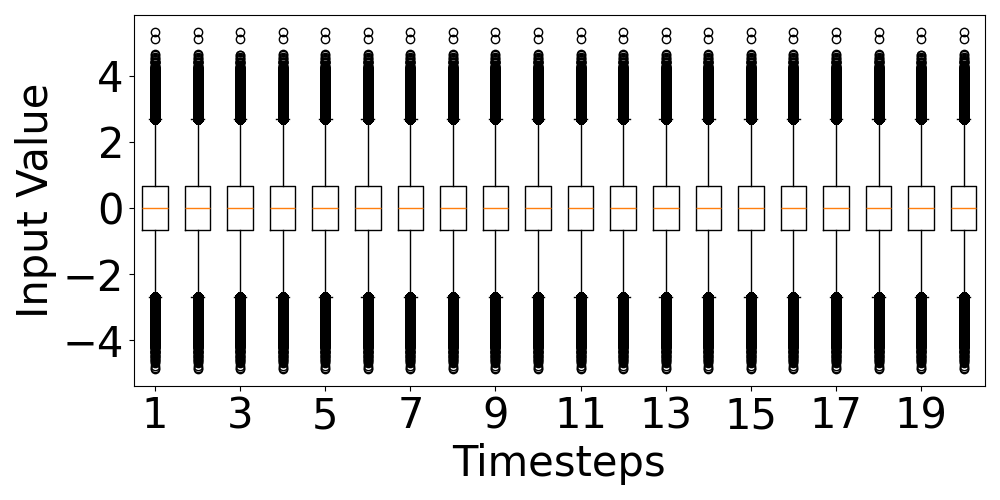} %
        \caption{\blue{DiT Input Value Box Plot}}
        \label{fig:input_distribution}
    \end{subfigure}
    \hspace{0.01\textwidth}
    \begin{subfigure}[b]{0.25\textwidth}
        \centering
        \includegraphics[width=\textwidth]{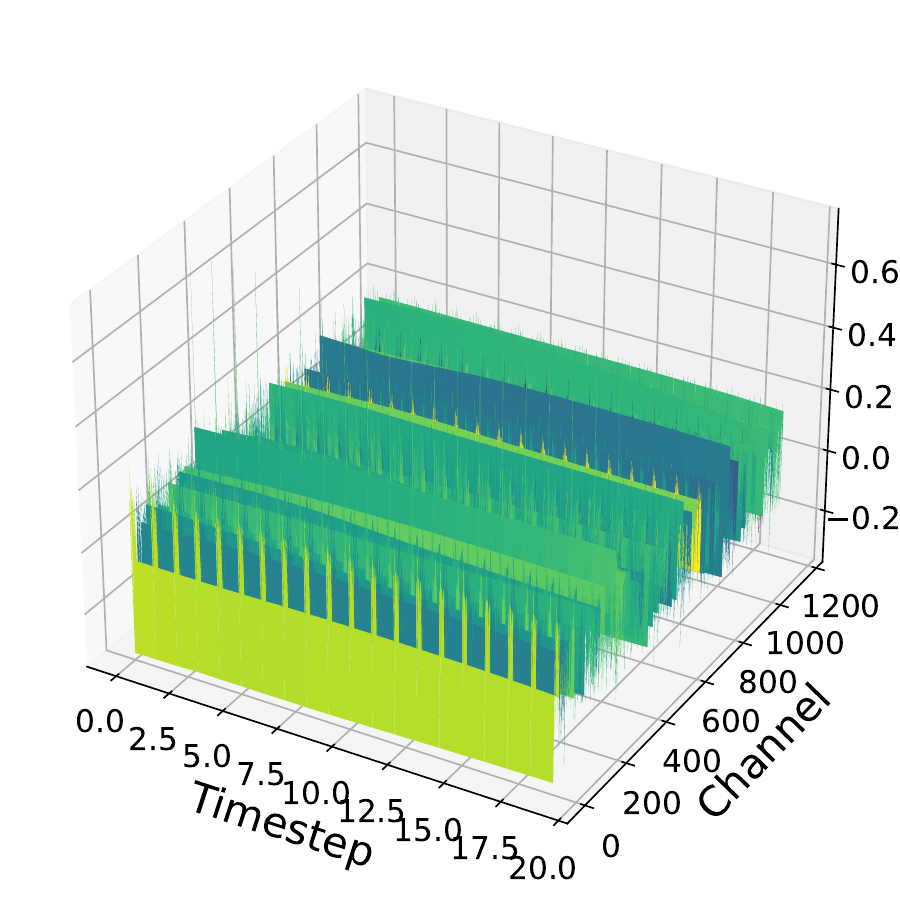}
        \caption{\blue{Output Scale of 7th DiT Block}}
        \label{fig:gate_mlp}
    \end{subfigure}
    \hspace{0.01\textwidth}
    \begin{subfigure}[b]{0.35\textwidth}
        \centering
        \includegraphics[width=\textwidth]{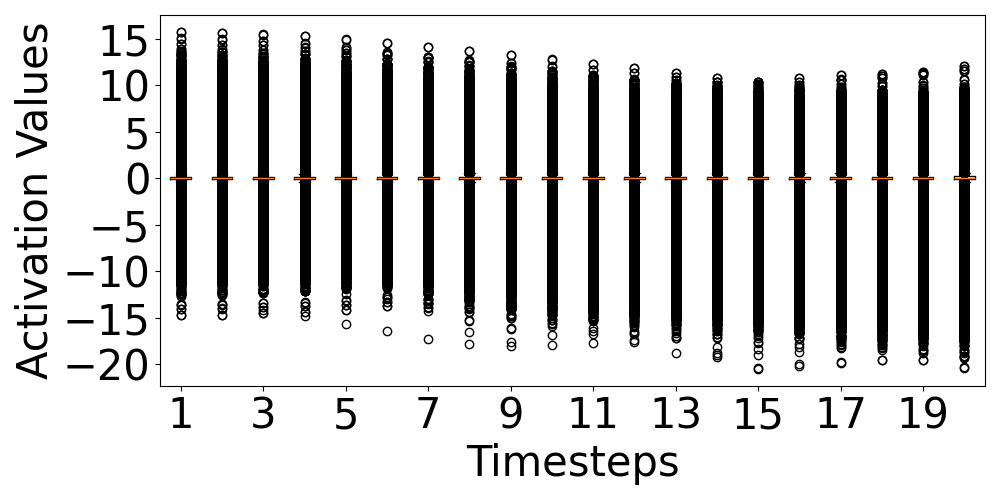}
        \caption{\blue{7th DiT Block Output Box Plot}}
        \label{fig:DiT_out}
    \end{subfigure}
    \caption{(a) Different timestep input values for PixArt-$\alpha$ on 128 images sampled from MS-COCO. The input does not shrink progressively across timesteps like U-Net DM. (b) The time-embedded scale for the output of the 7th DiT block's FeedForward. It is almost constant across timesteps. (c) The output of the 7th DiT block. Its range tends to remain constant but shifts as a function of time.}
    \label{fig:three_figures}
    \vspace{-10pt}
\end{figure*}

Our scale-aware AdaRound inherits Equation~\ref{eq: adaround_objective} as the learning objective because reducing the layer-wise and block-wise quantization error is the common goal of FPQ and INT quantization. Differently, We modified the $\widetilde{W}$ as:
\begin{equation}
\widetilde{W} = s \cdot \mathrm{clip} \left( \left\lfloor \frac{W}{s} \right\rfloor + h^\prime(\mathbf{V^\prime}), min, max\right)
\label{eq: Soft-quantized-scale}
\end{equation}
\begin{equation}
        h^\prime(\mathbf{V^\prime}) = \text{clip} \left( \sigma(\frac{\mathbf{V^\prime}}{s}) (\zeta - \gamma) + \gamma, 0, 1 \right)
\label{eq:h_prime}
\end{equation}
where $h^\prime(\cdot)$ is the scale-aware rectified sigmoid function and $V^\prime$ is a new continuous variable we optimized over.

The rectified sigmoid function functions as binary gates that control the rounding of weights. Specifically, the gate function $z = s \cdot h(V)$ is optimized according to Equation~\ref{eq: adaround_objective}.
Figure~\ref{fig:adaround_int} shows those binary gates in INT AdaRound. The gates are equivalent across all the weights, which matches the even distribution of INT quantization. In Figure~\ref{fig:adaround_origin}, we show the origin AdaRound's binary gates under different scales. The gates' slope depends on their scale, which causes imbalanced update during the gradient descent. In Figure~\ref{fig:adaround_scale}, we show the binary gates of our scale-aware AdaRound. In contrast, the gates' slope is normalized to the same level, which makes the weight reconstruction much more stable and thus aids the quantization.
For the mathematical proof of our scale-aware AdaRound, please refer to the appendix. %

\subsection{Token-wise Activation Quantization}

\begin{wrapfigure}{lt}{0.4\linewidth}
    \vspace{-10pt}
    \centering
    \includegraphics[width=\linewidth]{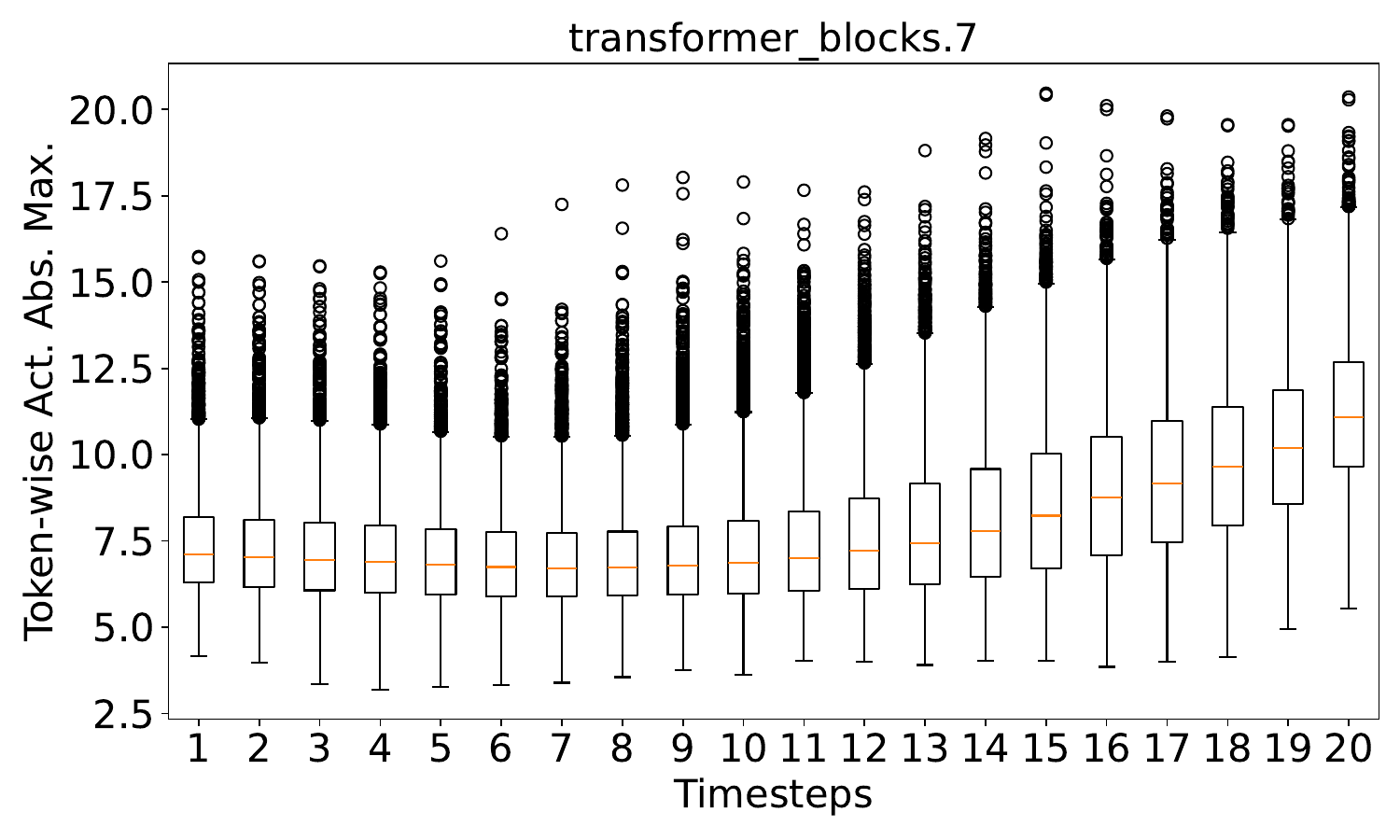}
    \caption{The distribution of the absolute maximum for each token's activation among 4096 tokens in the PixArt-$\alpha$ model. The distribution demonstrates a strong patch dependency in the DiT activation.}
    \label{fig:DiT_abs_act}
\end{wrapfigure}
Prior DM PTQ approaches~\cite{he2024ptqd, huang2024tfmq} use a calibration dataset to learn temporally-aware~\cite{he2023efficientdm} activation quantization scales. This approach is predicated on knowledge of how U-Net activation distributions change as a function of denoising timesteps, i.e., activation ranges taper-off towards the end of the denoising process~\cite{shang2023post, li2023q}.

In contrast, we show that this assumption does not hold for DiT models. 
First, we collect input activations of PixArt-$\alpha$ across 20 timesteps, revealing that the activation range remains stable over time, as shown in Figure~\ref{fig:input_distribution}. 
We then analyze the Adaptive Layernorm (AdaLN)~\cite{perez2018film} in the PixArt DiT model 
in Figure~\ref{fig:pixart}. Since the feed-forward scale directly influences the DiT block output range, we visualize the feed-forward scale in the first layer 
in Figure~\ref{fig:gate_mlp} and the output of the 7th DiT block
in Figure \ref{fig:DiT_out}. These figures demonstrate that the value of activations is primarily controlled by channels opposed to timesteps, and that the width of the activation range tends to remain constant, but shifts as a function of time.

Further, we plot the token-wise activation range in Figure~\ref{fig:DiT_abs_act}. This plot visualizes the absolute maximum activation of each image patch (token) across time. The results indicate that the activation range varies significantly, even among tokens within the same timestep.

Recent works by Microsoft~\cite{yao2022zeroquant, wu2023zeroquant} propose an online token-wise %
activation quantization that yields %
superior results when quantizing transformer activations. %
Table~\ref{tab:act_quant_ablation} applies this technique %
to the DiT scenario. Specifically, we apply simple min-max quantization to reduce weight precision of PixArt-$\alpha$ to 4-bits (W4), then consider 8-bit (A8) and 6-bit (A6) activation quantization. In both scenarios, we observe CLIP performance that is closer to the full precision model using token-wise activation quantization as opposed to the traditional, temporally-aware scale calibration technique which is designed for U-Nets. As such, we consider token-wise activation quantization throughout the remainder of this work by
substituting it into U-Net 
baselines
like Q-Diffusion~\cite{li2023q} and TFMQ-DM~\cite{huang2024tfmq}.

\section{Results}
\label{sec:results}

\begin{wraptable}{rt}{0.45\linewidth}
    \centering
    \scalebox{0.8}{
    \begin{tabular}{l|c|c} \toprule
    \textbf{Method} & \textbf{Precision} & \textbf{CLIP $\uparrow$} \\ \midrule
    No Quantization & W16A16 & 0.3075 \\ \midrule
    Temporally-Aware Act. Quant. & W4A6 &  0.2012\\
    Token-wise Act. Quant. & W4A6 & 0.3036  \\
    \midrule
    Temporally-Aware Act. Calib. & W4A8 & 0.2410 \\
    Token-wise Act. Quant. & W4A8 & 0.3120 \\
    \bottomrule
    \end{tabular}
    }
    \caption{CLIP score~\cite{hessel2021clipscore} reported when quantizing PixArt-$\alpha$ to 4-bit weights (W4) and 8 or 6-bit activations (A8 and A6) using the online token-wise method and temporarally-aware scale calibration. Specifically, we generate 1k images per configuration using COCO~\cite{coco} prompts and compare against the validation set. Higher CLIP is better.} %
    \label{tab:act_quant_ablation}
    \vspace{-10mm}
\end{wraptable}
In this section we conduct experiments to verify the efficacy of FP4DiT. We 
elaborate on our experimental setup and then compare FP4DiT to several baselines approaches to highlight its competitiveness in terms of quantitative metrics and qualitative image generation output. Specifically, we consider three text-to-image (T2I) models: PixArt-$\alpha$~\cite{chen2024pixartAlpha}, PixArt-$\Sigma$~\cite{chen2024pixartsigma} and Hunyuan~\cite{li2024hunyuandit}. We also conduct several ablation studies to verify the components of our method. Finally, we report several hardware metrics tabulating the cost savings and throughput of FP4DiT. 

\subsection{Experimental Settings}

We use the HuggingFace Diffusers library~\cite{von-platen-etal-2022-diffusers} to instantiate the base DiT models in W16A16 bit-precision %
and consider the default values for inference parameters like number of denoising steps and classifier-free guidance (CFG) scale. 
We quantize weights to 4-bit precision FP format. Specifically, we set the weight format for the first linear layer in each pointwise feed-forward to be E3M0. We quantize all other weights to E2M1 for PixArt-$\alpha$ and Hunyuan, and E1M2 for PixArt-$\Sigma$. Further details on this decision are provided in Session~\ref{sec:format}. %
Finally, our weight quantization is group-wise~\cite{frantar2022gptq, park2022lut} along the output channel dimension with a group size of 128. 

We perform weight quantization calibration using our scale-aware AdaRound and BRECQ. Weight calibration requires a small amount of calibration data. We use 128 (64 for Hunyuan) prompts from the MS-COCO 2014 train~\cite{coco} dataset and calibrate for 2.5k iterations per DiT block or layer. Next, we perform activation quantization to 8 or 6-bit precision using min-max token-wise quantization from ZeroQuant~\cite{yao2022zeroquant, wu2023zeroquant}. 
We provide further hyperparameter details in the appendix. %

\begin{table*}[t!]
    \centering
    \scalebox{0.74}{
    \begin{tabular}{l|l|c|c|c|c|c|c|c|c|c} \toprule
    \multicolumn{3}{c|}{\textbf{Benchmark}} & \multicolumn{5}{|c|}{\textbf{HPSv2}} & \multicolumn{3}{|c}{\textbf{MS-COCO}} \\ \midrule 
    \textbf{Model} & \textbf{Method} & \textbf{Precision} & \textbf{Animation$\uparrow$} & \textbf{Concept-art$\uparrow$} & \textbf{Painting$\uparrow$} & \textbf{Photo$\uparrow$} & \textbf{Average$\uparrow$} & \textbf{FID $\downarrow$} & \textbf{CLIP$\uparrow$} & \textbf{\blue{IR$\uparrow$}}  \\ \midrule
    \multirow{12}{*}{PixArt-$\alpha$} & Full Precision & W16A16 & 32.56 & 31.06 & 30.76 & 29.67 & 31.01 & 34.05 & 0.3075 & 0.7037 \\ \cmidrule(lr){2-11}
    & Q-Diffusion & W4A8 & 24.18 & 23.43 & 22.91 & 22.15 & 23.17 & 52.20 & 0.3017 & 0.1855 \\ 
    & TFMQ-DM       & W4A8 & 27.88 & 26.03 & 25.14 & 24.91 & 25.99 & 64.73 & \textit{0.3066} & 0.4706 \\
    & ViDiT-Q       & W4A8 & 17.93 & 17.35 & 16.81 & 17.59 & 17.42 & 39.11 & 0.2900 & 0.4716 \\
    & Q-DiT         & W4A8 & \textit{28.49} & \textbf{26.91} & \textit{27.55} & \textit{26.56} & \textit{27.38} & \textit{36.17} & 0.3062 & \textit{0.4979} \\
    & FP4DiT (ours) & W4A8 & \textbf{28.63} & \textit{26.79} & \textbf{27.59} & \textbf{26.72} & \textbf{27.43} & \textbf{30.37} & \textbf{0.3076} & \textbf{0.5509} \\ \cmidrule(lr){2-11}
    & Q-Diffusion   & W4A6 & 12.63 & 13.39 & 13.32 & 10.65 & 12.50 & 70.96 & 0.2868 & -0.1825 \\
    & TFMQ-DM       & W4A6 & 24.02 & \textbf{23.36} & 22.72 & \textit{23.04} & 23.29 & 90.09 & \textit{0.3015} & 0.1299 \\
    & ViDiT-Q       & W4A6 & 16.71 & 16.28 & 16.23 & 16.56 & 16.44 & 56.54 & 0.2827 & 0.1296 \\
    & Q-DiT         & W4A6 & \textbf{24.66} & 22.59 & \textit{23.29} & \textit{23.04} & \textit{23.40} & \textit{43.91} & \textit{0.3015} & \textit{0.1323} \\
    & FP4DiT (ours) & W4A6 & \textit{24.57} & \textit{23.08} & \textbf{23.30} & \textbf{23.26} & \textbf{23.55} & \textbf{40.61} & \textbf{0.3031} & \textbf{0.1353} \\ \midrule
    \multirow{12}{*}{PixArt-$\Sigma$} & Full Precision & W16A16 & 33.07 & 31.58 & 31.54 & 30.49 & 31.67 & 36.94 & 0.3139 & 0.9223 \\ \cmidrule(lr){2-11}
    & Q-Diffusion    & W4A8 & 27.30 & 26.11 & 26.06 & \textbf{25.06} & 26.13 & 36.89 & \textit{0.3050} & 0.2285 \\
    & TFMQ-DM        & W4A8 & 24.95 & 23.59 & 23.19 & 22.73 & 23.62 & 62.98 & 0.2975 & 0.1047\\
    & ViDiT-Q        & W4A8 & 27.10 & \textbf{26.45} & \textbf{26.24} & 24.49 & 26.07 & 37.17 & 0.2562 & \textit{0.2832} \\
    & Q-DiT          & W4A8 & \textit{27.74} & 26.23 & 26.05 & \textit{24.67} & \textit{26.17} & \textit{32.03} & 0.3048 & 0.2679 \\
    & FP4DiT (ours)  & W4A8 & \textbf{27.95} & \textit{26.29} & \textit{26.16} & \textit{24.67} & \textbf{26.27} & \textbf{31.58} & \textbf{0.3064} & \textbf{0.2927} \\ \cmidrule(lr){2-11}
    & Q-Diffusion    & W4A6 & 24.07 & 22.42 & 22.47 & 21.72 & 22.67 & 74.83 & \textit{0.3027} & 0.0828 \\
    & TFMQ-DM        & W4A6 & 19.30 & 18.46 & 18.85 & 17.50 & 18.53 & 154.13 & 0.2346 & -0.2281 \\
    & ViDiT-Q        & W4A6 & 24.84 & 23.23 & 23.48 & 21.94 & 23.37 & 87.47 & 0.2425 & -0.1156 \\
    & Q-DiT          & W4A6 & \textit{25.67} & \textit{23.57} & \textit{23.60} & \textit{22.64} & \textit{23.87} & \textit{73.20} & 0.2894 & \textit{0.1614} \\
    & FP4DiT (ours)  & W4A6 & \textbf{26.91} & \textbf{25.55} & \textbf{25.22} & \textbf{23.93} & \textbf{25.40} & \textbf{42.21} & \textbf{0.3040} & \textbf{0.2085} \\ \midrule 
    \multirow{12}{*}{Hunyuan} & Full Precision & W16A16 & 33.72 & 31.84 & 31.52 & 31.24 & 32.08 & 59.08 & 0.3102 & 0.8701 \\ \cmidrule(lr){2-11}
    & Q-Diffusion    & W4A8 & 26.00 & 24.74 & 24.84 & 24.26 & 24.96 & 82.43 & 0.3006 & 0.6874 \\
    & TFMQ-DM        & W4A8 & 28.77 & \textit{27.63} & \textbf{27.67} & 26.15 & \textit{27.56} & 89.39 & 0.3075 & 0.6873 \\
    & ViDiT-Q        & W4A8 & 28.74 & 26.79 & 26.49 & \textbf{27.20} & 27.31 & \textit{82.22} & \textit{0.3099} & 0.7423 \\
    & Q-DiT          & W4A8   & \textit{28.91} & 27.27 & 27.15 & 26.72 & 27.52 & 85.33 & 0.3088 & \textit{0.7539} \\
    & FP4DiT (ours)  & W4A8 & \textbf{28.96} & \textbf{27.75} & \textit{27.47} & \textit{26.94} & \textbf{27.78} & \textbf{81.94} & \textbf{0.3102} & \textbf{0.7669}\\ \cmidrule(lr){2-11}
    & Q-Diffusion    & W4A6 & 14.90 & 14.83 & 15.32 & 14.24 & 14.83 & \textit{255.69} & 0.2277 & -1.9236 \\
    & TFMQ-DM        & W4A6 & 13.73 & 13.31 & 13.16 & \textbf{14.51} & 13.68 & 258.02 & 0.2520 & -1.9007 \\
    & ViDiT-Q        & W4A6 & \textit{15.11} & \textbf{15.20} & \textit{15.41} & 14.17 & \textit{14.97} & 265.52 & \textit{0.2559} & -1.8123 \\
    & Q-DiT          & W4A6 & 13.41 & 13.15 & 13.48 & 12.94 & 13.24 & 262.31 & 0.2312 & \textbf{-1.1822}\\
    & FP4DiT (ours)  & W4A6 & \textbf{15.23} & \textit{14.94} & \textbf{15.57} & \textit{14.27} & \textbf{15.00} & \textbf{255.06} & \textbf{0.2562} & \textit{-1.7880} \\ \midrule
    \end{tabular}
    }
    \caption{Quantitative evaluation results for %
    PixArt-$\alpha$, PixArt-$\Sigma$ and Hunyuan in terms of HPSv2, FID, CLIP and ImageReward (IR) score. Specifically, for each configuration, we generate 10k images (5k for Hunyuan) using COCO 2014 validation set prompts. Best and second best results in \textbf{bold} and \textit{italics}, respectively.} %
    \label{tab:main_result}
    \vspace{-5pt}
\end{table*}

\subsection{Main Results}
\label{sec:mainresult}

In our experiment, we consider four baseline approaches: Q-Diffusion~\cite{li2023q}, TFMQ-DM~\cite{huang2024tfmq}, ViDiT-Q~\cite{zhao2024vidit} and Q-DiT~\cite{chen2024QDiT}. Note that while Q-Diffusion and TFMQ-DM are originally designed for U-Nets, we modify these approaches to use the same online token-wise activation quantization as FP4DiT per Table~\ref{tab:act_quant_ablation}, while ViDiT-Q and Q-DiT uses this mechanism by default. Further, note that ViDiT-Q uses mix-precision meaning some of their layers are not quantized to 4-bits. For the sake of convenience, we use their mix-precision as a W4 baseline to conduct our experiments.

We generate $512\times512$ resolution images using PixArt-$\alpha$ and $1024\times1024$ for PixArt-$\Sigma$ and Hunyuan. For evaluation, we  primarily consider the Human Preference Score v2 (HPSv2)~\cite{wu2023human} benchmark and MS-COCO 2014~\cite{coco} validation set. Specifically, HPSv2 considers four image categories: animation, concept-art, painting and photography, and estimates the human preference score of an image generated using a prompt with respect to one category. Each category contains 800 prompts, requiring 3.2k images to be generated to fully evaluate. The final HPSv2 score is the average estimated human preference across all four categories. Additionally, for COCO 2014, we %
measure the FID~\cite{heusel2017gans}, CLIP~\cite{hessel2021clipscore} score \blue{and average ImageReward (IR)~\cite{xu2023imagereward} score} 
using prompts from the %
validation set.

Table~\ref{tab:main_result} compares our method with the stated baseline approaches at the W4A8 and W4A6 precision levels. 
FP4DiT consistently outperforms all other methods across three different base models at every precision level in terms of each performance metric. 
On HPSv2, FP4DiT achieves \blue{higher} %
performance across every model and bit-precision combination. Although other approaches may achieve slightly higher performance on an individual category, these gaps are small, while FP4DiT leads in terms of FID, and CLIP on the COCO 2014 validation set. \blue{Finally, we observe a modest amount of IR degradation for PixArt-$\alpha$ and Hunyuan compared to the W16A16 model, however this is most pronounced for PixArt-$\Sigma$ where all quantization methods suffer a substantial IR score loss.} 

\blue{We believe this degradation is two-fold. First, PixArt-$\Sigma$ generates larger images, which means there is more pixel space to introduce noise. Second, IR estimates how much a human would prefer an image given the prompt, as opposed to CLIP which measures prompt adherence by comparing the latent embeddings of prompt and image, meaning the errors introduced by quantization are likely to have a greater impact. Nevertheless, FP4DiT achieves the highest IR performance in most of the model and precision settings.}

Next, we compare performance across activation bit-precision. For the PixArt-series DiT models, A6 precision can cause substantial performance degradation for some baselines, while FP4DiT still maintains high performance. This is specifically noteworthy on PixArt-$\Sigma$ as baseline approaches suffer a loss in terms of either FID and/or CLIP performance at A6 compared to A8, but this is not the case for FP4DiT. %
For the Hunyuan DiT model, A6 quantization remains highly challenging across the board, however, our method still achives the best results at this level \blue{except the IR score}. %
Overall though, Table~\ref{tab:main_result} demonstrates the efficacy of FP4DiT in term of quantitative T2I compared to prominant baseline approaches.

Next, Figure~\ref{fig:pixart_result} provides qualitative image samples on the PixArt models. Note the higher quality of the images generated by FP4DiT at both the A8 and A6 levels. Specifically, on PixArt-$\alpha$, the puppies have more realistic detail and the generated image more closely aligns with the W16A16 model. This is especially true for the 
PixArt-$\Sigma$ sample images, \blue{where FP4DiT accurately captures the intricate details of the portrait of a middle-aged Asian woman, including the fractured porcelain and paint splatters. In contrast, the other quantization methods either introduce significant artifacts, lose fine details, or fail to accurately represent the prompt's content, further highlighting the robustness and superior performance of the FP4DiT approach.}

Further, Figure~\ref{fig:hunyuan_result} provides image results on Hunyuan, where we again note the detail present in the FP4DiT image, while the baseline approaches are much noisier and have yellow backgrounds which are not prompt-adherent. Finally, additional visualization results can be found in the appendix. %

\begin{figure*}[t!]
    \centering
    \includegraphics[width=1\linewidth]{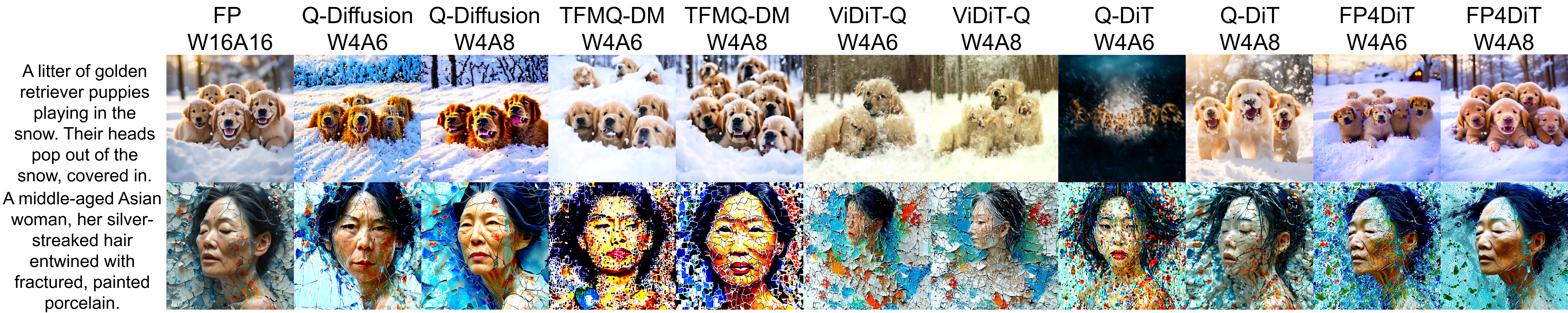}
    \caption{PixArt-$\alpha$ (up) and PixArt-$\Sigma$ (down) images and comparison between FP4DiT and related work. Best viewed in color and zoomed in.}
    \label{fig:pixart_result}
\end{figure*}

\subsubsection{User Preference Study}
To further verify the utility of our method, we conduct several human user preference studies to qualitatively compare FP4DiT to existing baseline approaches. Specifically, for each human user preference study, we have a set of prompts (i.e., introduced in \citep{chen2024pixartAlpha}) as well as quantized variants of a single model (e.g., Hunyuan-DiT quantized to W4A8 using ViDiT-Q, Q-DiT and FP4DiT). Each quantized model generates one image per prompt. For each prompt, we solicit the opinion of a human participant, by presenting them the set of images produced using the prompt by different methods, and ask them to select which image they prefer, in terms of perceived prompt adherence and general visual quality. We then tally up the number of votes each method obtains and compute how often it is selected compared to its competitors, e.g., preference score (\%), where higher is better. 
\begin{figure}[t]
    \centering
    \includegraphics[width=1\linewidth]{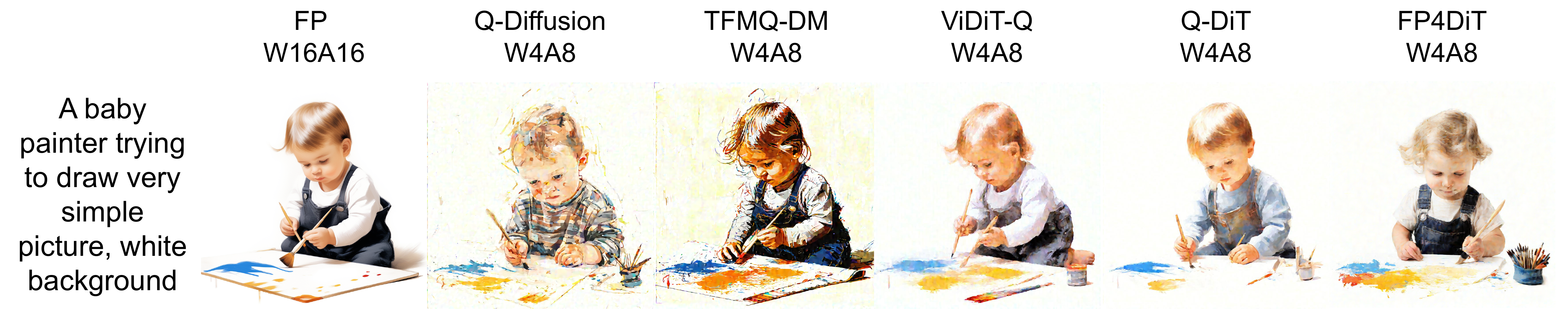}
    \caption{Hunyuan image comparison. Note the details like `white background' and `detailed hair texture' for FP4DiT. Best viewed in color.}
    \label{fig:hunyuan_result}
\end{figure}

\begin{table}[t!]
    \begin{minipage}{0.48\linewidth}
    \centering
    \scalebox{0.8}{
    \begin{tabular}{l|l|c|c} \toprule
    \textbf{Model} & \textbf{Method} & \textbf{Prec.} & \textbf{Preference(\%) $\uparrow$} \\ \midrule
    \multirow{3}{*}{PixArt-$\Sigma$} & ViDiT-Q & W4A6 &  17.09  \\ 
    & Q-DiT & W4A6 & 30.77 \\
    & FP4DiT & W4A6 & \textbf{52.14} \\ \midrule
    \multirow{3}{*}{Hunyuan} & ViDiT-Q & W4A8 & 19.17  \\
    & Q-DiT & W4A8 & 35.83  \\
    & FP4DiT & W4A8 & \textbf{45.00} \\ 
    \midrule
    \end{tabular}
    }
    \caption{DiT PTQ user preference study between FP4DiT, ViDiT-Q, and Q-DiT on PixArt-$\Sigma$ W4A6 precision and Hunyuan-DiT W4A8 precision.}
    \label{tab:user_preference_vidit_qdit}
    
    \end{minipage}
    \quad
    \begin{minipage}{0.48\linewidth}
    \centering
    \scalebox{0.8}{
    \begin{tabular}{l|l|c|c} \toprule
    \textbf{Model} & \textbf{Method} & \textbf{Prec.} & \textbf{Preference(\%) $\uparrow$} \\ \midrule
    \multirow{6}{*}{Hunyuan} & Q-Diffusion & W4A8 & 6.58  \\ 
    & TFMQ-DM & W4A8 & 36.84  \\
    & FP4DiT  & W4A8 & \textbf{56.58} \\ \cmidrule(lr){2-4}
    & Q-Diffusion & W4A6 & 32.84  \\
    & TFMQ-DM & W4A6 & 28.36  \\
    & FP4DiT & W4A6 & \textbf{38.81} \\ 
    \midrule
    \end{tabular}
    }
    \caption{AdaRound/BRECQ user preference study between FP4DiT, Q-Diffusion and TFMQ-DM on Hunyuan DiT with W4A8 and W4A6 quantization.}
    \label{tab:hunyuan_user_preference}

    \end{minipage}
    
\end{table}

Our first test focused on comparing FP4DiT to other methods explicitly designed for DiT PTQ: ViDiT-Q and Q-DiT. This study consists of 120 prompts and 17 human participants. We compare generated image quality for PixArt-$\Sigma$ at W4A6 precision and Hunyuan at W4A8 precision. Table~\ref{tab:user_preference_vidit_qdit} reports our findings. FP4DiT is preferred over 50\% of the time for PixArt-$\Sigma$ W4A6 precision level, followed by Q-DiT and then ViDiT-Q. %
This result aligns with Table~\ref{tab:main_result} where FP4DiT achieves the best result and Q-DiT outperforms ViDiT-Q in this setting. %
For Hunyuan at W4A8 the preference vote share for baseline methods grow, but they do not overtake FP4DiT, which demonstrates the veracity of our approach.

Next, we conduct an additional study comparing FP4DiT to baseline approaches originally designed with U-Net DMs in mind but which utilize advanced PTQ via AdaRound and BRECQ: Q-Diffusion and TFMQ-DM. 
Specifically, we consider 75 random prompts and 15 human participants. %
The baseline DM is Hunyuan %
quantized to either W4A8 or W4A6 precision. %
Table~\ref{tab:hunyuan_user_preference} reports our findings. At A8 precision FP4DiT clearly outperforms the other methods with a majority of the images being favored, while it also obtains almost 40\% of the votes at A6 precision.

\begin{wrapfigure}{rt}{0.45\linewidth}  
    \centering
    \includegraphics[width=1\linewidth]{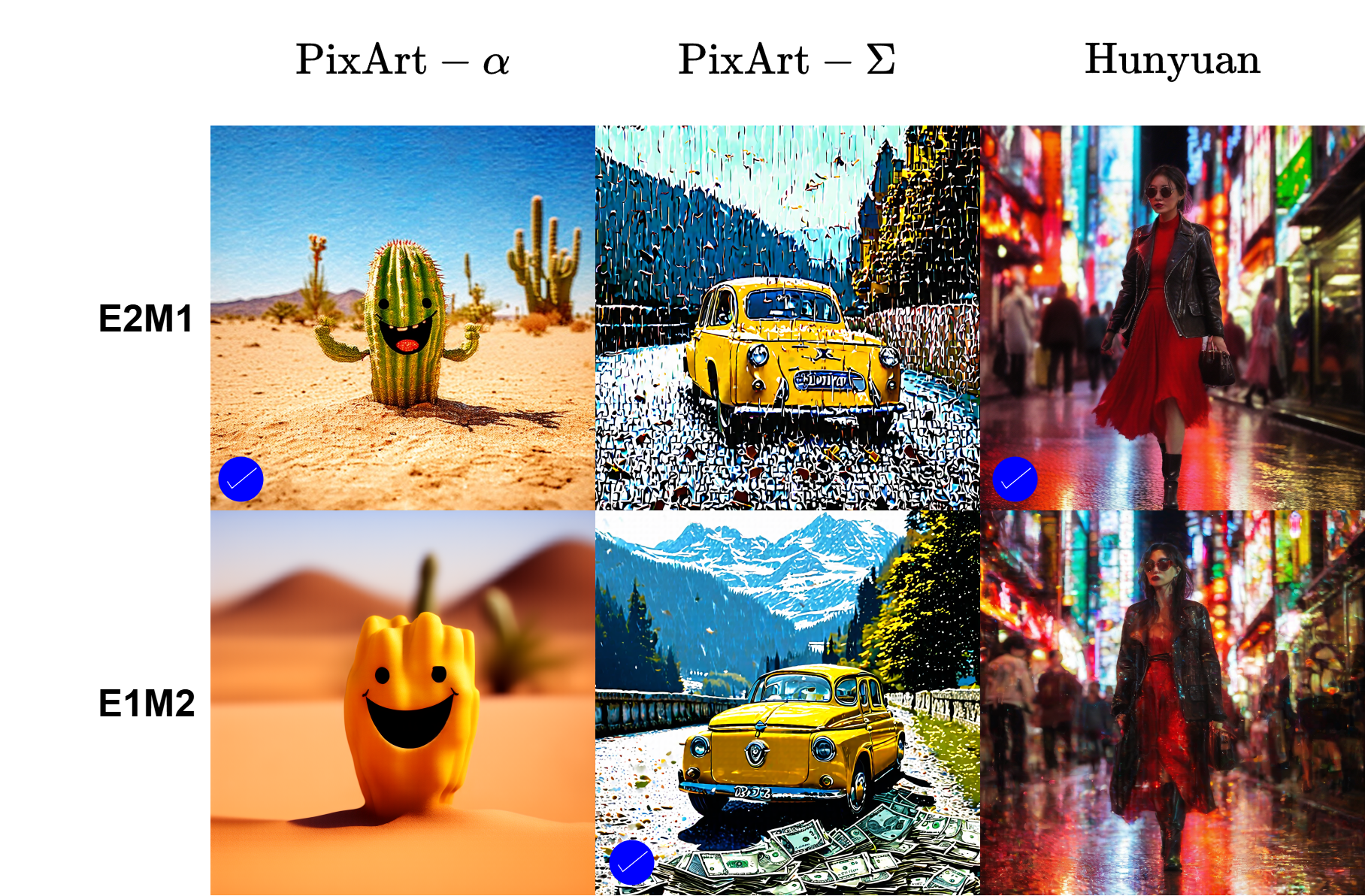}
    \caption{E2M1 (up) and E1M2 (down) min-max FPQ visualization results on PixArt and Hunyuan DiT. E2M1 is preferred by PixArt-$\alpha$ and Hunyuan, while E1M2 is preferred by PixArt-$\Sigma$.}
    \label{fig:E2M1vsE1M2}
    \vspace{-6mm}
\end{wrapfigure}

\newpage
\subsection{Ablation Studies}
We conduct ablation studies on the PixArt-$\alpha$ model 
to verify the contribution of each component of FP4DiT.
The experiment settings
are consistent with Section~\ref{sec:mainresult} 
unless specified. Additional ablation studies can be found in the appendix. %

\subsubsection{Effect of FP Format.}
\label{sec:format}
Recall the mixed format strategy for FPQ in FP4DiT: we apply E3M0, which allocates more value closer to the GELU sensitive interval, to the first pointwise linear layer and use a unified FP format from E2M1 and E1M2 for the rest layers. To choose a better one between E2M1 and E1M2 while avoiding data leakage, instead of quantizing FP4DiT with two formats and selecting the better one based on the FID and CLIP, we only employ the basic min-max quantization scheme (without AdaRound technique thereby avoiding the use of calibration data) and leave the activation un-quantize (e.g. W4A16). We use the PixArt prompts to generate images and apply user preference studies to determine the format. 

\begin{wraptable}{rt}{0.45\linewidth}
    \centering
    \scalebox{0.8}{
    \begin{tabular}{l|c|c} \toprule
    \textbf{Method} & \textbf{Precision} & \textbf{HPSv2 $\uparrow$} \\ \midrule
    Full Precision & W16A16 & 31.01 \\
    \midrule
    Q-Diffusion & W4A16 & 25.22 \\
    Q-Diffusion-FPQ & W4A16 & 8.78 \\
    \midrule
    + Group Quant & W4A16 & 25.05\\
    + Scale-Aware AdaRound & W4A16 & 26.44 \\
    + Sensitive-aware FF Quant. & W4A16 & 28.21 \\
    \bottomrule
    \end{tabular}
    }
    \caption{The effect of different methods proposed for weight quantization on W4A16 PixArt-$\alpha$.}
    \label{tab:ablation}
    \vspace{-10mm}
\end{wraptable}

Figure \ref{fig:E2M1vsE1M2} shows the visualization results of E2M1 and E1M2 FPQ. For the PixArt-$\alpha$, %
E2M1 demonstrates better suitability, as the cactus in E1M2 loses its texture, resulting in a mismatch between the image and the prompt.
For PixArt-$\Sigma$ and Hunyuan, inappropriate FP format causes noise on the generated image, leading to suboptimal performance. In conclusion, it is straightforward and unambiguous to determine the unified FP format based on these visualization results.

\begin{wrapfigure}{rt}{0.85\linewidth}   
    \centering
    \includegraphics[width=0.45\textwidth]{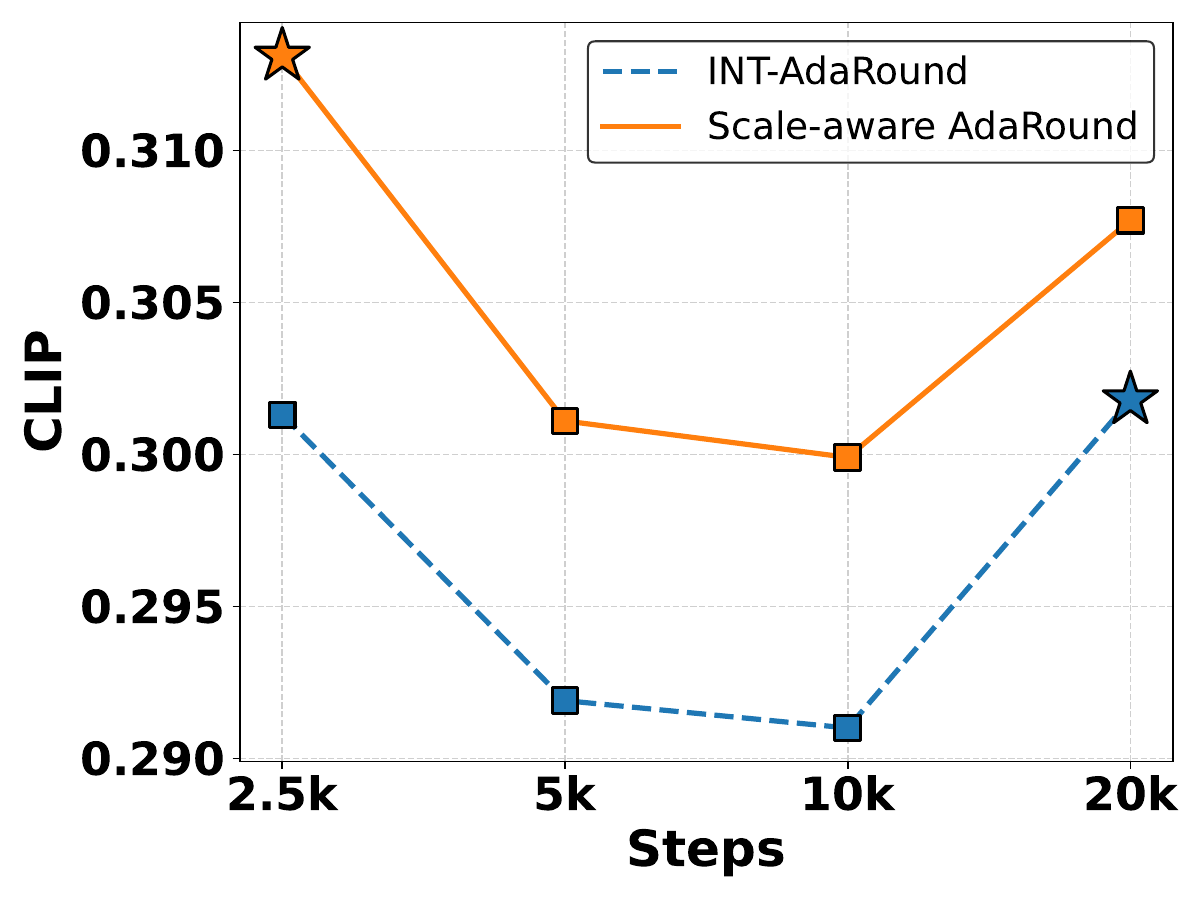}
    \caption{The performance of different AdaRound methods with different calibration budgets on W4A16 PixArt-$\alpha$ quantization generating 1k images. The stars indicate the optimal budget for INT and scale-aware AdaRound.}
    \vspace{-10mm}
    \label{fig:AdaRound_ablation}
\end{wrapfigure}
\subsubsection{Effect of Weight Quantization}
To evaluate the effectiveness of our weight quantization method, we perform an ablation study on weight-only quantization, e.g. W4A16. 
As depicted in Table~\ref{tab:ablation}, our method progressively improves the weight quantization: Initially, directly applying FPQ to Q-Diffusion results in a significant degradation. Our research then reveals that group quantization is necessary for FPQ. Notably, using the original AdaRound on top of FPQ impedes its effectiveness. Subsequently, our sensitive-aware mixed format FPQ (E3M0 in pointwise linear) further improves the post-quantization performance. Eventually, our scale-aware AdaRound advances the boundaries of optimal performance by considering the multi-scale nature of FP weight reconstruction.

\subsubsection{Effect of Scale-Aware AdaRound}

To further verify our scale-aware AdaRound for FP quantization, we compare our scale-aware AdaRound to the origin AdaRound with INT quantization in the W4A16 setting. 
Note that the calibration budget is crucial for the performance of weight reconstruction. BRECQ~\cite{li2021brecq} uses 20k as default and this setting is inherited by prior DM quantization research like Q-Diffusion and TFMQ-DM. Thus, we configured calibration budgets at \{2.5k, 5k, 10k, 20k\} to ensure a fair comparison. Figure \ref{fig:AdaRound_ablation} outlines the CLIP of different AdaRound methods on PixArt-$\alpha$ under different calibration budgets. 
Scale-aware AdaRound achieving optimal budget with 8 times fewer calibration steps than INT AdaRound, highlights its effectiveness in reducing calibration costs without compromising reconstruction quality.

\subsection{Hardware Cost Comparison}

\begin{table}[t!]
\centering
\scalebox{0.8}{
\begin{tabular}{l|c|c|c|c}
\toprule
\textbf{Layer} & \textbf{W4A8-FP (ms)} & \textbf{W4A6-FP (ms)} & \textbf{W4A8-INT (ms)} & \textbf{W16A16 (ms)} \\
\midrule
Feedforward Layer 1 & 303.42 (1.54×) & 313.54 (1.49×) & 302.56 (1.54×) & 465.99 \\
Feedforward Layer 2 & 313.31 (1.50×) & 318.67 (1.47×) & 312.80 (1.50×) & 469.91 \\
Self-Attention QKV Proj. & 78.33 (1.51×) & 79.94 (1.48×) & 78.16 (1.51×) & 118.28 \\
\bottomrule
\end{tabular}
}
\caption{The latency and the speedup of different linear layers in PixArt-$\alpha$/$\Sigma$ under different quantization precision generating a $1024 \times 1024$ image, \blue{e.g., batch size of 2, 4096 tokens, each with an embedding dimension of 1152}. Result measured on RTX 5080 GPU using CUDA 12.8.}
\label{tab:latency}
\end{table}

\begin{table}[t!]
    \centering
    \scalebox{0.8}{
    \begin{tabular}{l|c|c|c|c} \toprule
    \textbf{GPU} & \textbf{INT8} & \textbf{FP8} & \textbf{FP6} & \textbf{FP4} \\ \midrule
    RTX 4090~\cite{nvidia_5090} & 660.6 TOPS & 660.6 TFLOPS & -- & -- \\
    H100~\cite{nvidia_h100_tensor_core_2024} & 1979 TOPS & 1979 TFLOPS & -- & -- \\ \midrule
    RTX 5090~\cite{nvidia_5090} & 838 TOPS & 838 TFLOPS & 838 TFLOPS & 1676 TFLOPS \\
    HGX B100~\cite{nvidia_blackwell_2024} & 56 POPS & 56 PFLOPS & 56 PFLOPS & 112 PFLOPS \\
    HGX B200~\cite{nvidia_blackwell_2024} & 72 POPS & 72 PFLOPS & 72 PFLOPS & 144 PFLOPS \\
    \bottomrule
    \end{tabular}
    }
    \caption{GPU throughput rates for different low-bit datatype formats. Horizontal line demarcates older Ada Lovelace/Hopper GPUs from state-of-the-art Blackwell series. Older series do not support FP6 and FP4.} 
    \label{tab:gpu_flops_comparison}
\end{table}

Finally, we measure the hardware latency impact of FPQ compared to more traditional INT quantization and the W16A16. Specifically, we consider the quantization latency cost to execute some of the common, yet costlier weight/activation operations in a DiT, namely the self-attention mechanism and feedforward module. Specifically, the linear layers corresponding to the self-attention mechanism (Q, K, V and output projection) all share the same weight/activation dimensions, while the feedforward consists of two linear layers which expand and then contract the token/patch embedding dimension, respectively. Additionally, weights and activations may not share the same bit precision, imposing additional dequantization cost. 

We develop a CUDA 12.8 kernel and measure 
hardware latency on an Nvidia 5080, a Blackwell GPU which supports low-bit FP formats. We consider the self-attention weight/activation dimensions found in PixArt-$\alpha$/$\Sigma$ when generating an $1024 \times 1024$ image. Table~\ref{tab:latency} reports our findings. The result indicates that, despite the general expectation that floating-point computations are more demanding than integer operations, the latency results are nearly identical across all tested quantization precisions. Specifically, although A6 precision imposes some additional overhead compared to A8 (either for FP or INT), this is quite minor, leading to an overall speedup around 1.5x compared to W16A16, much like the popular W4A8 integer quantization~\cite{li2023q, zhao2024vidit, chen2024QDiT}. 

Leveraging this finding, our FPQ method can achieve superior performance without sacrificing computational efficiency. %
In fact, this %
latency results align with the stated computation rate of different INT and FP formats across different Nvidia GPUs, which we report in Table~\ref{tab:gpu_flops_comparison}. Specifically, %
GPUs %
can handle INT and FP quantization with similar computational throughput, 
which ensures that FP-based quantization does not introduce additional computational overhead. %
Lastly, although FP6 shares the same compute throughput as 8-bit formats (see Table~\ref{tab:gpu_flops_comparison}), it brings memory saving (25\% smaller tensor) which is essential for low-bit quantization, as transformers often become memory-bound~\cite{xia2024quant} during token generation. As a result, FP6 enhances DiT's inference efficiency.

\begin{table}[t!]
    \centering
    \scalebox{0.8}{
    \begin{tabular}{l|l|c|c} \toprule
    \textbf{Model} & \textbf{Precision} & \textbf{Model Size (MB)$\downarrow$} & \textbf{TBOPs$\downarrow$}\\ \midrule
    \multirow{6}{*}{PixArt-$\alpha$} & W16A16 & 610.86 & 35.72 \\
    \cmidrule(lr){2-4}
     &W8A8 & 305.53 & 8.938 \\
     &W4A8 & 152.87 & 4.474 \\
     &W4A6 & 152.87 & 3.358 \\
     &W4A8-G128 (ours) & 158.59 & 4.474 \\
     &W4A6-G128 (ours) & 158.59 & 3.358 \\
    \bottomrule
    \end{tabular}
    }
    \caption{The comparison of model size and Bit-Ops of different Precision on PixArt-$\alpha$. G128 denotes group-wise weight quantization with a group size of 128. Lower is better.}
    \label{tab:ablation_cost}
\end{table}

We also compare the hardware cost of the quantized FP4DiT model with the full-precision model in terms of memory and energy consumption
using two metrics: model size and Bit-Ops (BOPs)~\cite{he2024ptqd}.
Specifically, model size refers to the disk space required to store the model checkpoint weights and scales, while BOPs is a quantization-aware extension of the MACs~\cite{chu2022nafssr, mills2023aiop} metric which measures the compute cost of a neural network forward pass. 
As shown in Table~\ref{tab:ablation_cost}, group-wise weight quantizationintroduces only moderate overhead; nonetheless, our method still substantially reduces the model size and BOPs.

\section{Conclusion}
\label{sec:conclusion}
In this paper, we propose FP4DiT, a PTQ method that achieves W4A6 and W4A8 quantization on T2I DiT using FPQ. %
We use a mixed FP formats strategy based on the special structure of DiT and propose scale-aware AdaRound to enhance the weight quantization for FPQ. %
We analyze the difference between the activation of U-Net DM and DiT and apply token-wise online activation quantization based on the findings. Our experiments demonstrate \blue{that FP4DiT achieves performance improvement in terms of quantative benchmarks such as \blue{CLIP and ImageReward scores} on the MS-COCO dataset and HPSv2 benchmark, while also obtaining qualitative visualization improvement at minimal hardware cost.}

\bibliography{main}
\bibliographystyle{tmlr}

\appendix
\clearpage
\setcounter{page}{1}
\newtheorem{theorem}{Theorem}
\renewcommand\thesection{\Alph{section}} 
\setcounter{section}{0}

\section{Appendix}
\subsection{Proof for Scale-Aware AdaRound}
AdaRound~\cite{nagel2020up} and BRECQ~\cite{li2021brecq} use gradient descent to update the rounding mask. According to Eqs.~\ref{eq: adaround_objective}, \ref{eq: Soft-quantized} and \ref{eq:h_function}, ignoring the regularizer $f_{\mathrm{reg}}(\mathbf{V})$, we have following theorem which provides theoretical evidence that the origin AdaRound is imbalance across different scales $s$.

\begin{theorem}
Let \(s\) be the quantization scale corresponding to the rounding mask \(V\). Then for gradient descent, given as \(\mathbf{V}_{n+1} = \mathbf{V}_{n} - \alpha \nabla F(\mathbf{V}_{n})\), the subtraction \(\nabla F(\mathbf{V}_{n})\) is dependent on the scalar \(s\).
\end{theorem}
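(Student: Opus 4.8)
The plan is to differentiate the reconstruction objective of Eq.~\ref{eq: adaround_objective} directly and exhibit $s$ as an explicit multiplicative factor in $\nabla_{\mathbf{V}} F$. Dropping the regularizer, write $F(\mathbf{V}) = \| (W - \widetilde{W})\mathbf{x} \|_F^2$ and substitute $\widetilde{W}$ from Eq.~\ref{eq: Soft-quantized}. First I would restrict attention to the active region in which neither the outer clip onto $[\min,\max]$ in Eq.~\ref{eq: Soft-quantized} nor the inner clip onto $[0,1]$ in $h$ (Eq.~\ref{eq:h_function}) is saturated; outside this region the gradient vanishes identically, so the statement is about the active set. On that set, for the entry $W_{ij}$ whose rounding is controlled by $V_{ij}$ with scale $s$, the integer part $\lfloor W_{ij}/s \rfloor$ is constant in $V_{ij}$, so $\widetilde{W}_{ij} = s\left( \lfloor W_{ij}/s \rfloor + \sigma(V_{ij})(\zeta-\gamma) + \gamma \right)$ and therefore $\partial \widetilde{W}_{ij} / \partial V_{ij} = s\,(\zeta-\gamma)\,\sigma(V_{ij})\,(1 - \sigma(V_{ij}))$.

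Next I would push this through the chain rule. Writing the residual matrix $\mathbf{R} = (W - \widetilde{W})\mathbf{x}$, one has $\partial F / \partial \widetilde{W}_{ij} = -2\,(\mathbf{R}\,\mathbf{x}^\top)_{ij}$, an expression that carries no explicit factor of $s$. Composing gives
\[
(\nabla_{\mathbf{V}} F)_{ij} = \frac{\partial F}{\partial \widetilde{W}_{ij}} \cdot \frac{\partial \widetilde{W}_{ij}}{\partial V_{ij}} = -2\,s\,(\zeta-\gamma)\,\sigma(V_{ij})\,(1-\sigma(V_{ij}))\,(\mathbf{R}\,\mathbf{x}^\top)_{ij},
\]
so the gradient-descent increment $-\alpha\,(\nabla_{\mathbf{V}} F)_{ij}$ is proportional to $s$, which proves the statement: the step applied to each mask entry scales linearly with the quantization scale attached to it.

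Finally I would spell out the FPQ consequence. By Eq.~\ref{eq: fp_format} an $e$-bit-exponent floating-point grid is a union of $2^{e}$ uniform sub-grids with geometrically increasing step sizes, so weights of different magnitude are assigned different scales $s$; the per-entry gradient above then inherits a per-entry factor $s_{ij}$, which is exactly the scale-dependent slope of the binary gates in Fig.~\ref{fig:adaround_origin} and the source of the imbalanced optimization. For completeness I would repeat the computation with the scale-aware gate $h'$ of Eq.~\ref{eq:h_prime}: since $\partial \sigma(V'/s) / \partial V' = \sigma'(V'/s)/s$, the outer $s$ from $\partial \widetilde{W}/\partial V'$ cancels and the increment becomes $s$-independent (Fig.~\ref{fig:adaround_scale}). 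The main obstacle I anticipate is the careful handling of the two nested clips — establishing that the claimed $s$-dependence holds on the active set while the saturated cases are treated separately — together with the subtlety that $s$ also enters $\widetilde{W}$ implicitly, through $\lfloor W/s \rfloor$ and through $\mathbf{R}$, so the argument must make explicit that it is the \emph{multiplicative} $s$ in $\partial \widetilde{W}/\partial V$ that is responsible for the imbalance the theorem points at.
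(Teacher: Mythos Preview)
Your argument is correct and follows essentially the same chain-rule route as the paper: compute $\partial \widetilde{W}/\partial V = s\,\partial h/\partial V$ from Eq.~\ref{eq: Soft-quantized}, differentiate $h$ via Eq.~\ref{eq:h_function}, and compose with $\nabla_{\widetilde{W}} F$ to exhibit the explicit factor $s$. You are simply more thorough than the paper in treating the clip saturation cases and in writing out $\partial F/\partial \widetilde{W}$ explicitly; your closing remarks on the scale-aware cancellation anticipate the paper's Theorem~2.
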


\begin{proof}
    Refers to Eqs.~\ref{eq: adaround_objective}, \ref{eq: Soft-quantized} and gives:
    \begin{equation}
        \nabla F(\mathbf{V}_{n}) = \nabla_{\widetilde{W}} \| W\mathbf{x} - \widetilde{W}\mathbf{x} \|_F^2 \cdot  s \cdot \frac{\partial h(\mathbf{V}_{n})}{\partial \mathbf{V}_{n}}
    \label{eq:9} 
    \end{equation}
    Refers to Eq.~\ref{eq:h_function} and gives:
    \begin{equation}
        \frac{\partial h(\mathbf{V})}{\partial \mathbf{V}} = 
        (\zeta - \gamma) \cdot \sigma(\mathbf{V}) \cdot (1 - \sigma(\mathbf{V}))
    \label{eq:10}
    \end{equation}
    Therefore, combining Eq.~\ref{eq:9} and \ref{eq:10}, the subtraction \(\nabla F(\mathbf{V}_{n})\) is scaled by $s$.
\end{proof}

This theorem indicates that the imbalance gradient descent occurs if applying origin AdaRound to FPQ, as shown in Figure~\ref{fig:adaround_origin}.

In the section \ref{sec:method},
we propose a scale-aware version of AdaRound in Eq.~\ref{eq: Soft-quantized-scale} and \ref{eq:h_prime}. Ignoring the regularizer $f_{\mathrm{reg}}$, we have the following theorem which provides theoretical evidence that the scale-aware AdaRound has balanced gradient descent's update across different scales $s$.

\begin{theorem}
Let \(s\) be the quantization scale corresponding to the rounding mask \(V^\prime\). Then for gradient descent, given as \(\mathbf{V^\prime}_{n+1} = \mathbf{V^\prime}_{n} - \alpha \nabla F(\mathbf{V^\prime}_{n})\), the subtraction \(\nabla F(\mathbf{V^\prime}_{n})\) is independent of the scalar \(s\).
\end{theorem}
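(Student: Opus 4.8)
\section*{Proof proposal}

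The plan is to mirror the derivation used for the companion theorem on the original AdaRound, tracking exactly where the explicit factor of $s$ enters the gradient and showing that the reparametrization $\mathbf{V^\prime}/s$ inside the rectified sigmoid annihilates it. First I would apply the chain rule to $\nabla F(\mathbf{V^\prime}_n)=\nabla_{\mathbf{V^\prime}}\|W\mathbf{x}-\widetilde{W}\mathbf{x}\|_F^2$ using the scale-aware soft-quantized weight of Eq.~\ref{eq: Soft-quantized-scale}. Since $\widetilde{W}$ depends on $\mathbf{V^\prime}$ only through $h^\prime(\mathbf{V^\prime})$, with $\lfloor W/s\rfloor$ constant in $\mathbf{V^\prime}$ and the outer clip acting as the identity on its active interval, one gets $\partial\widetilde{W}/\partial\mathbf{V^\prime}=s\cdot\partial h^\prime(\mathbf{V^\prime})/\partial\mathbf{V^\prime}$, hence
\[
\nabla F(\mathbf{V^\prime}_n)=\nabla_{\widetilde{W}}\|W\mathbf{x}-\widetilde{W}\mathbf{x}\|_F^2\cdot s\cdot\frac{\partial h^\prime(\mathbf{V^\prime}_n)}{\partial\mathbf{V^\prime}_n},
\]
which is the exact analogue of Eq.~\ref{eq:9} with $h$ replaced by $h^\prime$.

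Next I would differentiate the scale-aware rectified sigmoid of Eq.~\ref{eq:h_prime}. Because its argument is $\mathbf{V^\prime}/s$ rather than $\mathbf{V^\prime}$, the inner derivative contributes a factor $1/s$, so on the support of the clip
\[
\frac{\partial h^\prime(\mathbf{V^\prime})}{\partial\mathbf{V^\prime}}=\frac{1}{s}(\zeta-\gamma)\,\sigma\!\left(\frac{\mathbf{V^\prime}}{s}\right)\left(1-\sigma\!\left(\frac{\mathbf{V^\prime}}{s}\right)\right),
\]
the scale-aware counterpart of Eq.~\ref{eq:10}. Substituting back, the $s$ coming from $\partial\widetilde{W}/\partial h^\prime$ cancels the $1/s$ from the chain rule, leaving
\[
\nabla F(\mathbf{V^\prime}_n)=\nabla_{\widetilde{W}}\|W\mathbf{x}-\widetilde{W}\mathbf{x}\|_F^2\cdot(\zeta-\gamma)\,\sigma\!\left(\frac{\mathbf{V^\prime}_n}{s}\right)\left(1-\sigma\!\left(\frac{\mathbf{V^\prime}_n}{s}\right)\right),
\]
which carries no explicit multiplicative $s$. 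Evaluating at the turning point $\mathbf{V^\prime}_n=0$ (the demarcation between rounding up and down, cf.\ Fig.~\ref{fig:adaround_scale}) gives the gate slope $\tfrac14(\zeta-\gamma)\,\nabla_{\widetilde{W}}\|W\mathbf{x}-\widetilde{W}\mathbf{x}\|_F^2$, identical across all $2^{E}$ FP scales --- precisely the normalization the theorem asserts, in contrast to the $s$-proportional slope of Fig.~\ref{fig:adaround_origin}.

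The main obstacle is stating the independence claim with the right precision: $\sigma(\mathbf{V^\prime}/s)$ literally still contains $s$, and the output-error term $\nabla_{\widetilde{W}}\|W\mathbf{x}-\widetilde{W}\mathbf{x}\|_F^2$ depends on $s$ through $\widetilde{W}$. I would therefore frame the result, exactly as in the preceding theorem, in terms of the \emph{explicit multiplicative scaling prefactor} of the update: the factor $s$ that multiplied the per-scale gate slope in the original AdaRound (Eqs.~\ref{eq:9}--\ref{eq:10}) is eliminated, so gates at different FP magnitudes receive comparably scaled updates near their turning points. A minor technicality to dispatch cleanly is the non-differentiability of $\lfloor\cdot\rceil$ and of the two $\mathrm{clip}$ operations; following AdaRound I would treat $\lfloor\cdot\rceil$ as having zero gradient and each $\mathrm{clip}$ as the identity on its active interval, so that the derivatives above are exactly the straight-through / almost-everywhere derivatives used during optimization.
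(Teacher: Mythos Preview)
Your proposal is correct and follows essentially the same route as the paper: apply the chain rule through Eq.~\ref{eq: Soft-quantized-scale} to pick up the factor $s$, differentiate the scale-aware rectified sigmoid of Eq.~\ref{eq:h_prime} to pick up $1/s$, and observe the cancellation. If anything you are more careful than the paper, which writes $\sigma(\mathbf{V^\prime})$ rather than $\sigma(\mathbf{V^\prime}/s)$ in its Eq.~\ref{eq:12} and does not explicitly flag the residual implicit $s$-dependence that you correctly note and resolve by interpreting the claim as the removal of the explicit multiplicative prefactor.
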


\begin{proof}
    The learning objective does not change for scale-aware AdaRound. Hence, from Eq.~\ref{eq: adaround_objective} and \ref{eq: Soft-quantized-scale}, we give:
    \begin{equation}
            \nabla F(\mathbf{V^\prime}_{n}) = \nabla_{\widetilde{W}} \| W\mathbf{x} - \widetilde{W}\mathbf{x} \|_F^2 \cdot s \cdot \frac{\partial h^\prime(\mathbf{V^\prime}_{n})}{\partial \mathbf{V^\prime}_{n}}
    \label{eq:11}
    \end{equation}
    Refer to Eq.~\ref{eq:h_prime} and give:
    \begin{equation}
        \frac{\partial h^\prime(\mathbf{V^\prime})}{\partial \mathbf{V^\prime}} = \frac{(\zeta - \gamma)}{s} \cdot \sigma(\mathbf{V^\prime}) \cdot (1 - \sigma(\mathbf{V^\prime}))
    \label{eq:12}
    \end{equation}
    Combining Eq.~\ref{eq:11} and \ref{eq:12}, we get:
    \begin{equation*}
        \nabla F(\mathbf{V^\prime}_{n}) = \nabla_{\widetilde{W}} \| W\mathbf{x} - \widetilde{W}\mathbf{x} \|_F^2 \cdot (\zeta - \gamma) \cdot \sigma(\mathbf{V^\prime}) \cdot (1 - \sigma(\mathbf{V^\prime}))
    \end{equation*}
    This result shows that subtraction $\nabla F(\mathbf{V^\prime}_{n})$ is independent of scale $s$.
\end{proof}
Therefore, the gradient descent is balanced and normalized among different scale $s$ with our scale-aware AdaRound, as depicted in Figure~\ref{fig:adaround_scale}.

\subsection{Additional Ablation Studies}
In addition to the ablation studies in the main text, we include multiple ablation studies here to further demonstrate the design of FP4DiT.

\begin{wraptable}{rt}{0.3\linewidth}
    \vspace{-5mm}
    \centering
    \scalebox{0.8}{
    \begin{tabular}{l|c|c} \toprule
    \textbf{Model} & \textbf{Group Size} & \textbf{FID $\downarrow$} \\ \midrule
    \multirow{4}{*}{PixArt-$\alpha$}  
    & 128 &  96.58 \\
    & 256 & 100.60 \\
    & 512 & 113.78 \\
    & 1024 & 174.38 \\ \midrule
    \end{tabular}
    }
    \caption{The quantization results for different group sizes with PixArt-$\alpha$.}
    \label{tab:group_size}
    \vspace{-10mm}
\end{wraptable}

\subsubsection{Effect of Group Size.}
We compare the group-wise weight quantization result with different group sizes in Table~\ref{tab:group_size}. Decreasing the group size $g$ for weight quantization consistently improves the quantization performance down to $g = 128$. According to~\citet{park2022lut}, group size $g$ such as 128 can result in substantial improvement while maintaining low latency. %

\begin{algorithm}[t]
\caption{MinMax Quantization for FP Format}
\label{alg:FPminmax}
\begin{algorithmic}[1]
\Require Full-precision array $A_{\text{FP}}$, number of bits $n$, exponent bits $n_e$, mantissa bits $n_m$, clipping value $maxval$
\State $A_{\text{abs}} \gets \text{abs}(A_{\text{FP}})$
\State $bias \gets 2^{n_e} - \log_{2}(A_{\text{abs}}) + \log_{2}(2 - 2^{-n_m}) - 1$
\State $A_{\text{clip}} \gets \min(\max(A_{\text{FP}}, -maxval), maxval)$
\State $S_{\text{log}} \gets \text{clamp}\big(\lfloor \log_{2}(\text{abs}(A_{\text{clip}})) \rfloor + bias, 1\big)$
\State $S \gets 2.0^{(S_{\text{log}} - n_m - bias)}$
\State $result \gets \text{round-to-nearest}(A_{\text{clip}} / S) \times S$
\State \Return $result$
\end{algorithmic}
\end{algorithm}

\subsubsection{\blue{Effect of Calibration Dataset}}

\begin{wraptable}{rt}{0.4\linewidth}
    \centering
    \scalebox{1}{
    \begin{tabular}{c|c|c|c} \toprule
    \textbf{N=64} & \textbf{N=64} & \textbf{N=64} & \textbf{N=64} \\ \midrule
    25.09 & 27.43 &	28.21 & 28.20\\
    \bottomrule
    \end{tabular}
    }
    \caption{\blue{The quantization results for different sizes of calibration dataset with PixArt-$\alpha$.}}
    \label{tab:calibration}
\end{wraptable}

We investigate FP4DiT’s effectiveness under different calibration dataset sizes $N$. Table~\ref{tab:calibration} reports the HPSv2 score of the W4A8 PixArt-$\alpha$ quantized by our method with $N = 64, 128, 256,$ and $512$. The results show steady improvements as $N$ increases, with performance saturating beyond $N = 256$. In particular, FP4DiT achieves a strong score of 27.43 with only 128 samples, and further gains to 28.21 at $N = 256$, while larger calibration does not bring additional benefits. This demonstrates that FP4DiT requires only a relatively small calibration set to achieve near-optimal performance, making it both efficient and effective compared to existing approaches.

\subsubsection{\blue{Classifier-Free Guidance (CFG)}}
\begin{wraptable}{lt}{0.5\linewidth}
    \centering
    \scalebox{0.8}{
    \begin{tabular}{l|c|c|c} \toprule
    \textbf{Precision} & \textbf{CFG=3.0} & \textbf{CFG=4.5} & \textbf{CFG=7.0} \\ \midrule
    W4A8 & 27.22 &	27.43 &26.74 \\
    \midrule
    Full Precision & 30.97&31.01&31.26  \\
    \bottomrule
    \end{tabular}
    }
    \caption{\blue{The quantization results for different CFG scales with PixArt-$\alpha$.}}
    \label{tab:CFG}
\end{wraptable}
We investigate FP4DiT's effectiveness under different CFG scales. Table \ref{tab:CFG} compares FP4DiT's quantization performance with PixArt-$\alpha$ for three different CFG scales, which are 3.0, 4.5 (default), and 7.0. Although FP4DiT is calibrated at $\text{CFG}=4.5$, it remains robust and exhibits only minor degradation under different CFG scales, which is still outstanding among the baseline methods as shown in Table \ref{tab:main_result}. 

\subsection{Extended Experimental Settings}

\subsubsection{Floating Point Quantization Scheme}
Since Floating Point Quantization (FPQ) is not as straightforward as INT quantization, there has not been a simple and unified algorithm for performing FPQ yet. In this paper, we apply Algorithm \ref{alg:FPminmax} from~\cite{kuzmin2022fp8} to perform the FPQ. Unlike~\cite{kuzmin2022fp8} learning the clipping value $maxval$ and bit allocations between mantissa and exponent part, we use the absolute maximum of the tensor as $maxval$ and perform FPQ with a predetermined FP format.

In addition to the FPQ algorithm and the group-wise/token-wise quantization, we provide our quantization hyperparameters in Table~\ref{tab:hyperparameter}. Note that the calibration size refers to the number of images we sampled from MS-COCO. For each image, we sample its input latent noise across 20 timesteps (50 for Hunyuan) as the calibration data for AdaRound. The activation FP format is for W4A6/W4A8 respectively.

\begin{wraptable}{lt}{0.6\linewidth}
    \centering
    \scalebox{0.8}{
    \begin{tabular}{l|c|c|c|c} \toprule
    \textbf{Model} & \textbf{Cali. Size} & \textbf{Cali. Step} & \textbf{Weight Format} & \textbf{Act. Format}\\ \midrule
    PixArt-$\alpha$ & 128 & 2500 & E2M1 & E2M3/E3M4 \\
    \midrule
    PixArt-$\Sigma$ & 128 & 2500 & E1M2 & E2M3/E3M4 \\
    \midrule
    Hunyuan & 64 & 2500 & E2M1 & E2M3/E3M4 \\
    \bottomrule
    \end{tabular}
    }
    \vspace{-3mm}
    \caption{Quantization calibration hyperparameters for FP4DiT.}
    \label{tab:hyperparameter}
\end{wraptable}
\subsection{Evaluation Settings}
For CLIP score~\cite{hessel2021clipscore}, we apply the openai-clip~\footnote{\url{https://github.com/openai/CLIP}} library to measure the CLIP score between prompts and generated images. We employ ViT-B/32 as the CLIP model. 
To measure HPSv2~\cite{wu2023human} score, we generate 3.2k images across the four categories (800 images per category) and use the provided human preference predictor to estimate the performance. 
For the Fr\'echet Inception Distance (FID)~\cite{heusel2017gans} measurement, we apply clean-fid~\footnote{\url{https://github.com/GaParmar/clean-fid}} library to measure the FID. %

\subsection{Hardware and Software Resources}
We execute our FP4DiT and baseline experiments on two rack servers. The first is equipped with 2 Nvidia A100 80 GPUs, an AMD EPYC-Rome Processor and 512GB RAM. The second server has 8 Nvidia V100 32GB GPUs, an Intel Xeon Gold GPU and 756GB RAM.

Our code is running under Python 3 using Anaconda virtual environments and open-source repository forks based on Q-Diffusion~\cite{li2023q}. We modified the code to implement our FP4DiT and enable the interface between Quantization scripts and the Hugging Face Diffusers~\footnote{\url{https://huggingface.co/docs/diffusers/en/index}} library. The hardware cost measurement is conducted using the ptflops~\footnote{\url{https://github.com/sovrasov/flops-counter.pytorch}} library.
We provide a code implementation with README listing all necessary details and steps.

\subsection{Additional Visualization Results}
In this section, we present the random samples derived from full-precision and W4A6/W4A8 PixArt-$\alpha$, PixArt-$\Sigma$, and Hunyuan that are quantized by %
baselines and FP4DiT. As depicted by Figures~\ref{fig:appendix_alpha_w4a8}, \ref{fig:appendix_alpha_w4a6}, \ref{fig:appendix_sigma_w4a8}, \ref{fig:appendix_sigma_w4a6}, \ref{fig:appendix_hunyuan_w4a8}, and \ref{fig:appendix_hunyuan_w4a6}, FP4DiT generates results of impressive visual content. 
FP4DiT consistently shows superior performance across various DiT models. We list a detailed comparison between FP4DiT and the baselines: %
\begin{enumerate}
    \item PixArt-$\alpha$: On W4A8 (Fig.~\ref{fig:appendix_alpha_w4a8}), FP4DiT generates more fine-grained images, e.g., macaroon and wave details. %
    Besides, the %
    baselines fail to generate a hedgehog while FP4DiT correctly depicts it. On W4A6 (Fig.~\ref{fig:appendix_alpha_w4a6}), our method shows near W4A8 performance, unlike the noisier baselines. %
    \item PixArt-$\Sigma$: On W4A8 (Fig.~\ref{fig:appendix_sigma_w4a8}), there is almost no noise on FP4DiT's generated images, which demonstrates the improvement of our method. As for the image details, FP4DiT has a more natural yoga posture, a more fine-grained model face, and more detailed cat hair and whiskers. Similar to PixArt-$\alpha$, FP4DiT's W4A6 (Fig.~\ref{fig:appendix_sigma_w4a6}) images have the least noise and best image quality.
    \item Hunyuan: On W4A8 (Fig.~\ref{fig:appendix_hunyuan_w4a8}), FP4DiT's generated images %
    closely align with the full precision model. For example, in the first image, FP4DiT has the same color right face while Q-Diffusion and TMFQ-DM alter the face's color. In the third image, FP4DiT generates the most similar face decoration. In addition, FP4DiT also has the least noise on Hunyuan DiT. On W4A6 (Fig.~\ref{fig:appendix_hunyuan_w4a6}), all methods generate blur images. However, the contour of images identified from FP4DiT's visualization results matches the full precision images, while other baselines fail to produce a prompt-adherent contour. %
\end{enumerate}

\begin{figure*}[]
    \centering
    \includegraphics[width=0.9\linewidth]{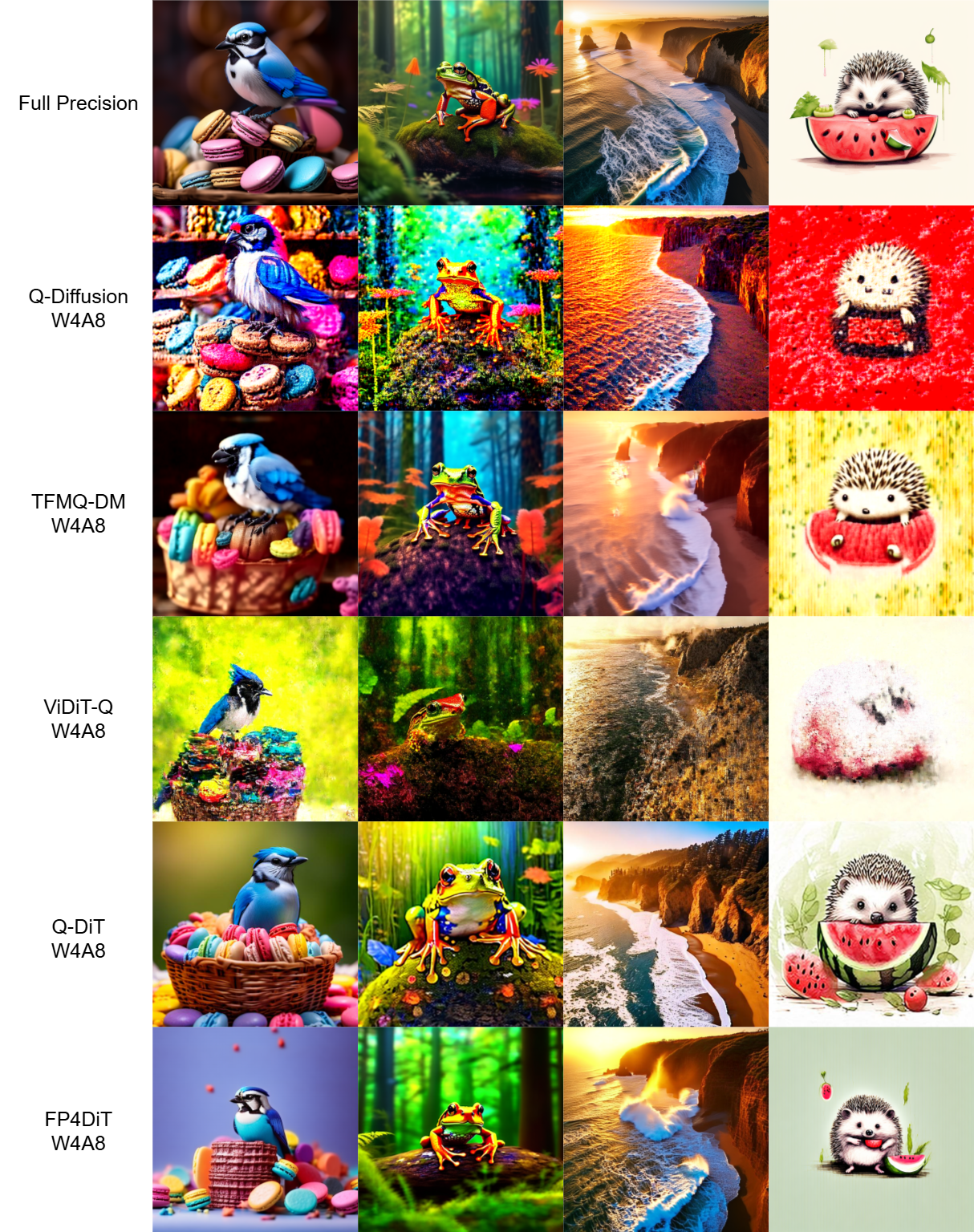}
    \caption{More visualization result for W4A8 PixArt-$\alpha$. Prompts: `A blue jay standing on a large basket of rainbow macarons.'; `Frog, in forest, colorful, no watermark, no signature, in forest, 8k.'; `Drone view of waves crashing against the rugged cliffs along Big Sur’s Garay Point beach. The crashing blue waters create white-tipped waves, while the golden light of the setting sun illuminates the rocky shore.'; `An ink sketch style illustration of a small hedgehog holding a piece of watermelon with its tiny paws, taking little bites with its eyes closed in delight. Photo of a lychee-inspired spherical chair, with a bumpy white exterior and plush interior, set against a tropical wallpaper.'}
    \label{fig:appendix_alpha_w4a8}
\end{figure*}
\begin{figure*}[]
    \centering
    \includegraphics[width=0.9\linewidth]{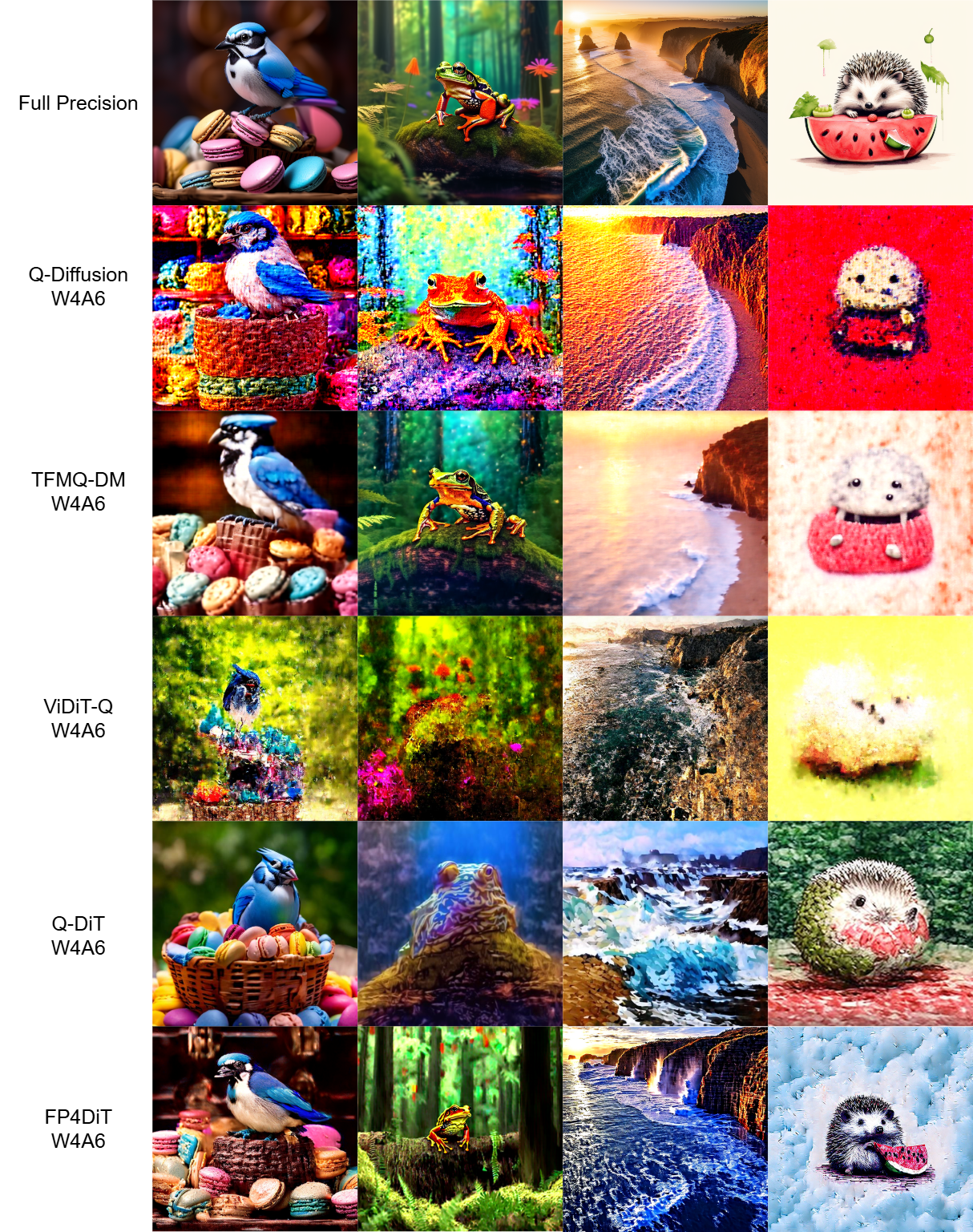}
    \caption{More visualization result for W4A6 PixArt-$\alpha$. Prompts: `A blue jay standing on a large basket of rainbow macarons.'; `Frog, in forest, colorful, no watermark, no signature, in forest, 8k.'; `Drone view of waves crashing against the rugged cliffs along Big Sur’s Garay Point beach. The crashing blue waters create white-tipped waves, while the golden light of the setting sun illuminates the rocky shore.'; `An ink sketch style illustration of a small hedgehog holding a piece of watermelon with its tiny paws, taking little bites with its eyes closed in delight. Photo of a lychee-inspired spherical chair, with a bumpy white exterior and plush interior, set against a tropical wallpaper.'}
    \label{fig:appendix_alpha_w4a6}
\end{figure*}

\begin{figure*}[]
    \centering
    \includegraphics[width=0.75\linewidth]{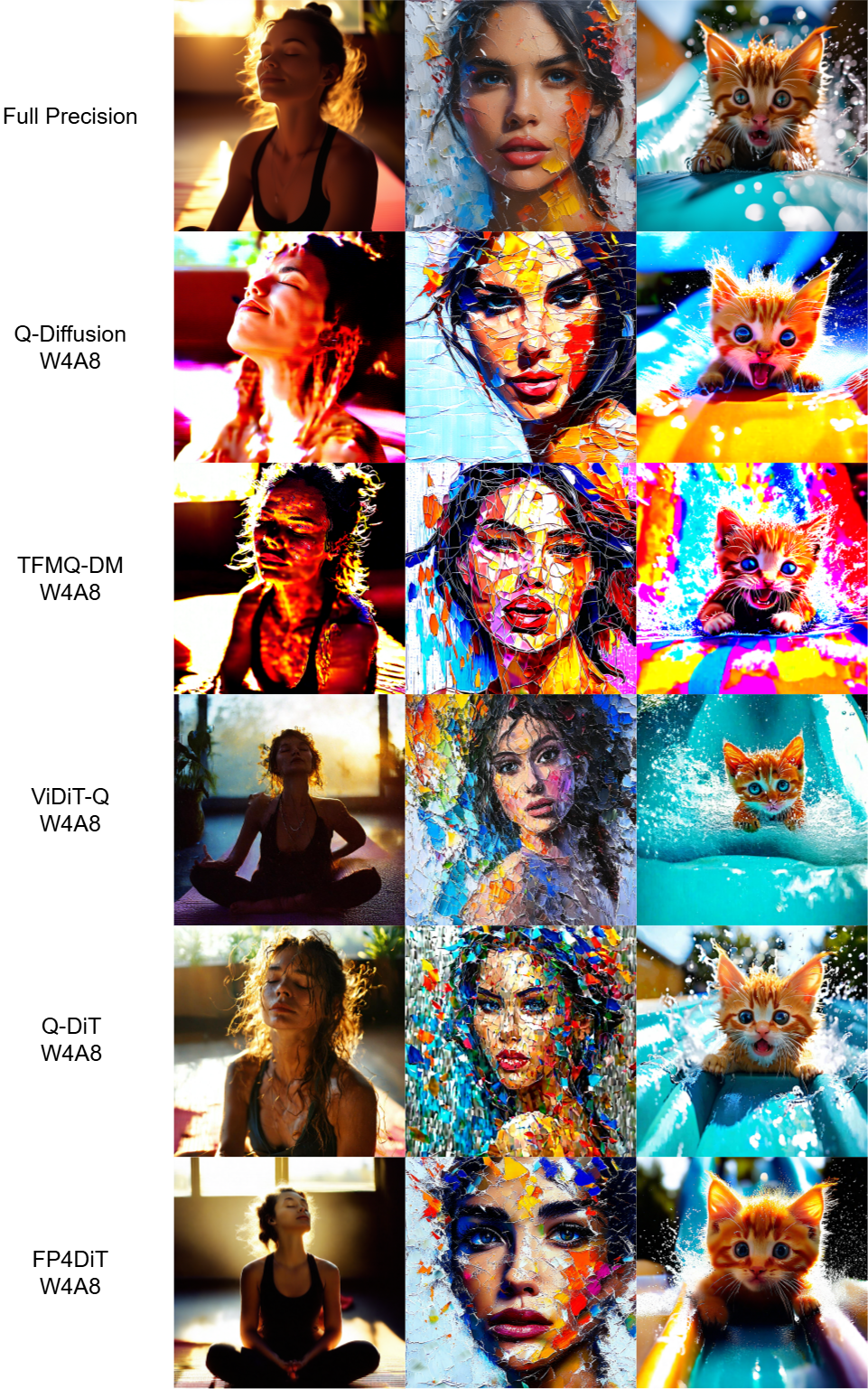}
    \caption{More visualization result for W4A8 PixArt-$\Sigma$. Prompts: `A very attractive and natural woman, sitting on a yoka mat, breathing, eye closed, no make up, intense satisfaction, she looks like she is intensely relaxed, yoga class, sunrise, 35mm.'; `Realistic oil painting of a stunning model merged in multicolor splash made of finely torn paper, eye contact, walking with class in a street.'; `A cute orange kitten sliding down an aqua slide. happy excited. 16mm lens in front. we see his excitement and scared in the eye. vibrant colors. water splashing on the lens.'}
    \label{fig:appendix_sigma_w4a8}
\end{figure*}

\begin{figure*}[]
    \centering
    \includegraphics[width=0.75\linewidth]{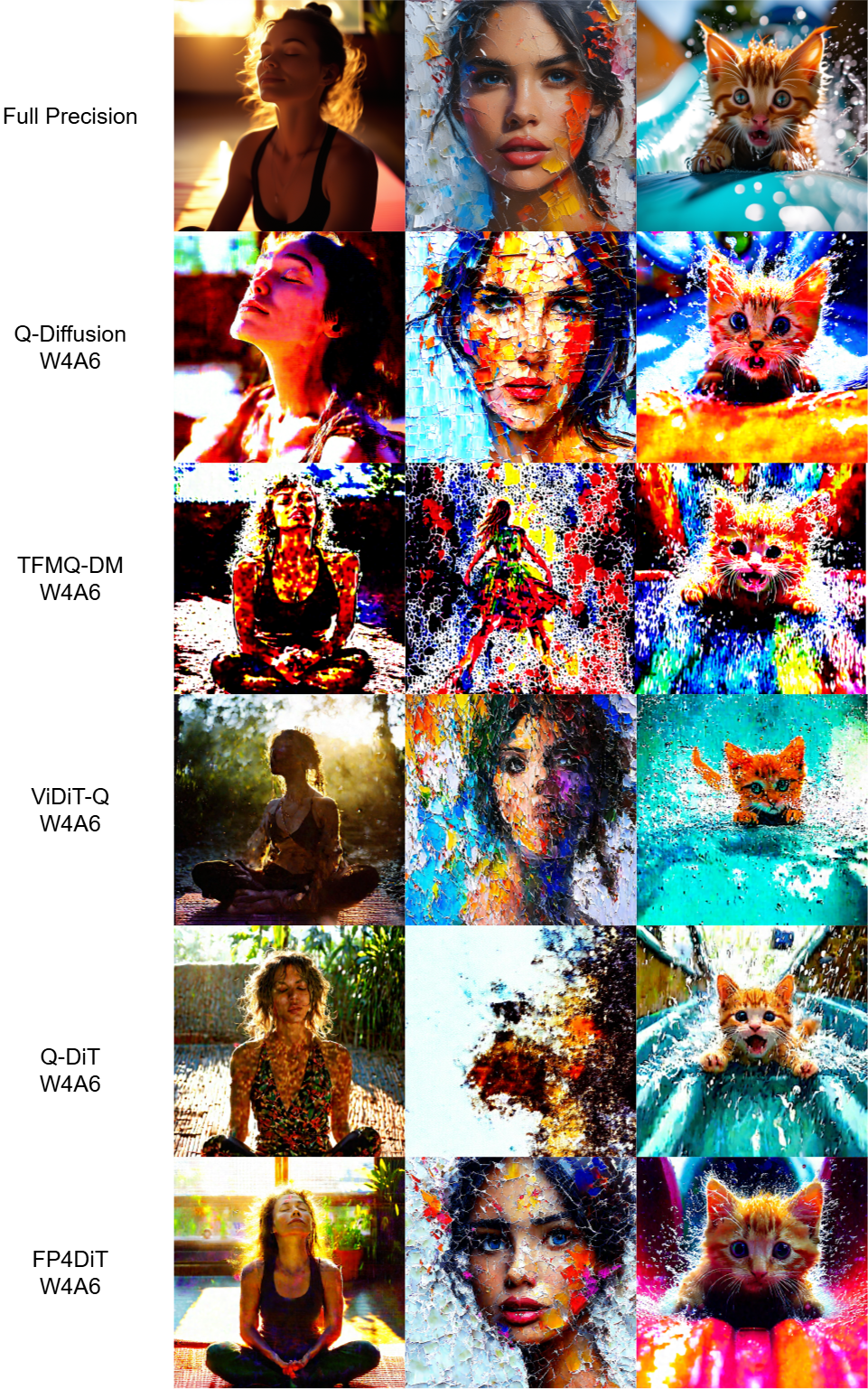}
    \caption{More visualization result for W4A6 PixArt-$\Sigma$. Prompts: `A very attractive and natural woman, sitting on a yoka mat, breathing, eye closed, no make up, intense satisfaction, she looks like she is intensely relaxed, yoga class, sunrise, 35mm.'; `Realistic oil painting of a stunning model merged in multicolor splash made of finely torn paper, eye contact, walking with class in a street.'; `A cute orange kitten sliding down an aqua slide. happy excited. 16mm lens in front. we see his excitement and scared in the eye. vibrant colors. water splashing on the lens.'}
    \label{fig:appendix_sigma_w4a6}
\end{figure*}

\begin{figure*}[]
    \centering
    \includegraphics[width=0.75\linewidth]{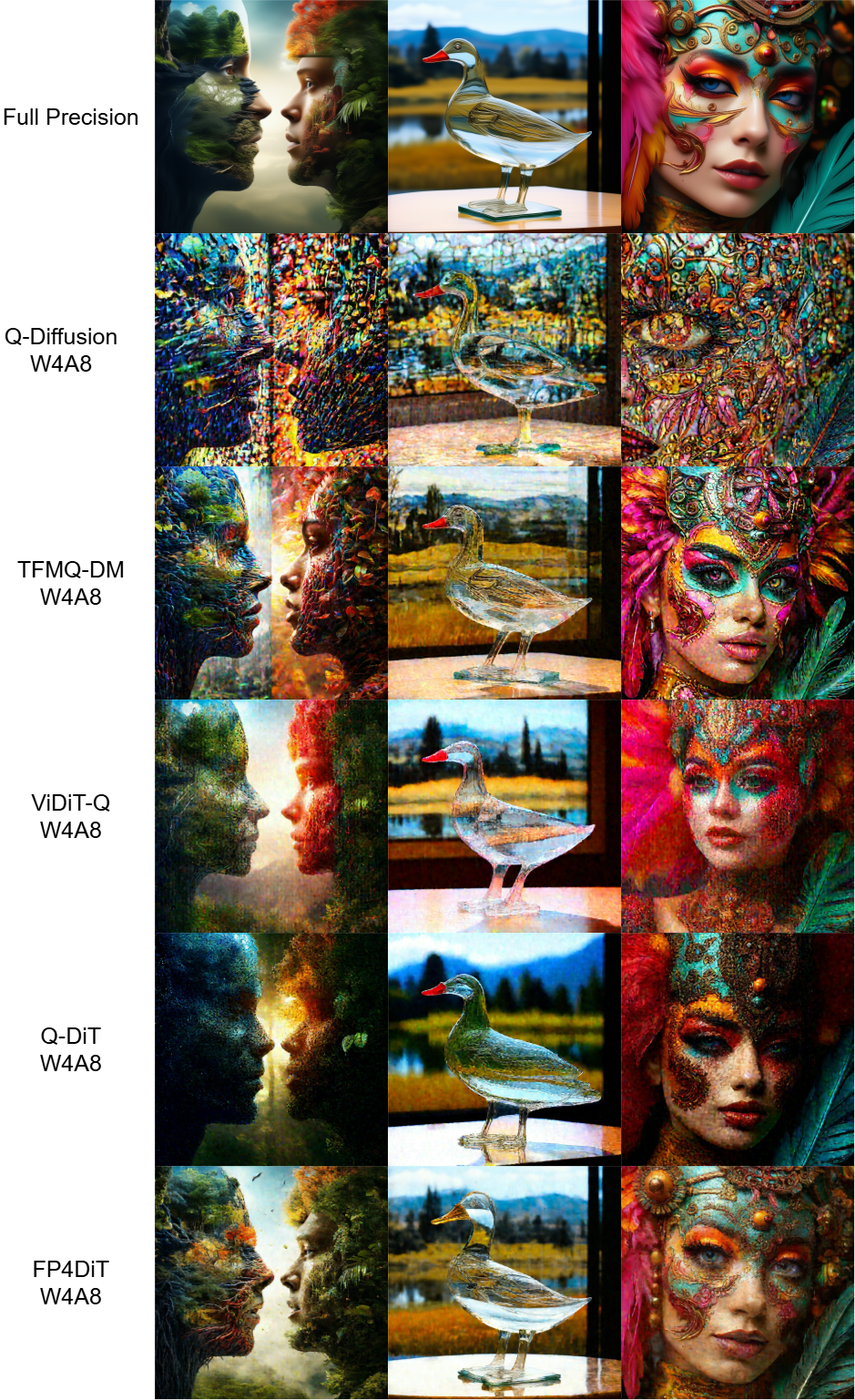}
    \caption{More visualization result for W4A8 Hunyuan. Prompts: `nature vs human nature, surreal, UHD, 8k, hyper details, rich colors, photograph.'; `A transparent sculpture of a duck made out of glass. The sculpture is in front of a painting of a landscape.'; `Steampunk makeup, in the style of vray tracing, colorful impasto, uhd image, indonesian art, fine feather details with bright red and yellow and green and pink and orange colours, intricate patterns and details, dark cyan and amber makeup. Rich colourful plumes. Victorian style.'}
    \label{fig:appendix_hunyuan_w4a8}
\end{figure*}

\begin{figure*}[]
    \centering
    \includegraphics[width=0.75\linewidth]{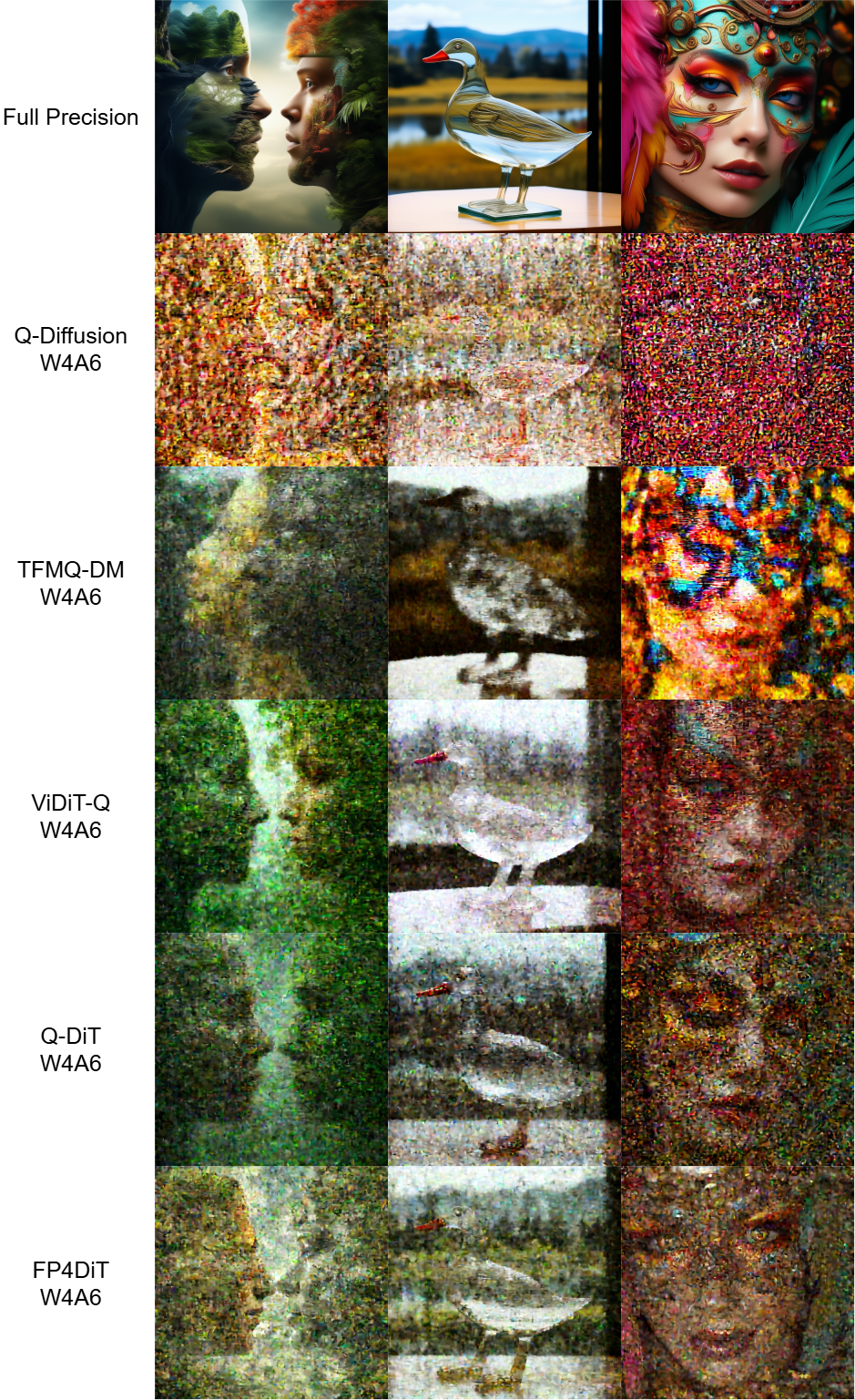}
    \caption{More visualization result for W4A6 Hunyuan. Prompts: `nature vs human nature, surreal, UHD, 8k, hyper details, rich colors, photograph.'; `A transparent sculpture of a duck made out of glass. The sculpture is in front of a painting of a landscape.'; `Steampunk makeup, in the style of vray tracing, colorful impasto, uhd image, indonesian art, fine feather details with bright red and yellow and green and pink and orange colours, intricate patterns and details, dark cyan and amber makeup. Rich colourful plumes. Victorian style.'}
    \label{fig:appendix_hunyuan_w4a6}
\end{figure*}

\end{document}